\documentclass{article}
\usepackage[preprint]{neurips_2026}
\usepackage[utf8]{inputenc} 
\usepackage[T1]{fontenc}    
\usepackage[hidelinks]{hyperref}       
\usepackage{url}            
\usepackage{booktabs}       
\usepackage{nicefrac}       
\usepackage{microtype}      
\usepackage[sort,numbers]{natbib}
\usepackage{graphicx}
\usepackage{textcomp}
\usepackage{tabularx}
\usepackage{multirow}
\usepackage{algorithm}
\usepackage{float}
\usepackage{algpseudocode}
\usepackage{amsmath,amsfonts,amsthm, amssymb} 
\usepackage{tikz}
\usepackage{wrapfig}
\usepackage[dvipsnames]{xcolor}
\usetikzlibrary{positioning,arrows.meta,calc,fit,backgrounds}
\theoremstyle{definition}
\newtheorem{common}{Common}[section]

\newtheorem{lemma}[common]{Lemma}

\newtheorem{proposition}[common]{Proposition}
    
\def\eg{\emph{e.g}., }
\def\ie{\emph{i.e}., }

\DeclareMathOperator*{\argmax}{argmax}

\newcommand{\pos}[1]{\textcolor{ForestGreen}{+#1}}
\newcommand{\negv}[1]{\textcolor{BrickRed}{#1}}
\newcommand{\neutral}[1]{\textcolor{black}{#1}}

\begin{document}
\title{Improving Certified Robustness via Adversarial Distillation}

\author{%
  Matteo Melis\thanks{Corresponding author.} \\
  School of EEECS\\
  Queen's University Belfast\\
  \texttt{mmelis01@qub.ac.uk} \\
  \And
  Jesus Martinez Del Rincon \\
  School of EEECS\\
  Queen's University Belfast\\
  \texttt{j.martinez-del-rincon@qub.ac.uk} \\
  \And
  Vishal Sharma \\
  School of EEECS\\
  Queen's University Belfast\\
  \texttt{V.Sharma@qub.ac.uk} \\
}

\maketitle

\begin{abstract}
Certified training aims to produce models whose predictions can be formally verified against adversarial perturbations, typically by optimising upper bounds on the worst-case loss over an allowed perturbation set. For neural networks, certified training methods based purely on tight relaxation bounds produce networks that are amenable to certification, but sacrifice standard accuracy. Conversely, adversarial training often yields stronger empirical robustness and standard accuracy, but the resulting models are generally difficult to certify with neural network verifiers. Recently, the literature has shown that better standard-certified accuracy trade-offs can be achieved by combining adversarial training objectives with loose over-approximations based on Interval Bound Propagation (IBP), effectively interpolating between lower and upper bounds of the worst-case loss. Building on this, we introduce AD-CERT, a certified training objective that combines adversarial distillation with an IBP upper bound. We show that distilling adversarial information over the logit space from an empirically robust teacher provides an effective lower bound surrogate for certified training, with AD-CERT achieving state-of-the-art certified performance on several robustness benchmarks. Furthermore, in a unified setup, distilling adversarial information at the logit-level is shown to improve certified accuracy over a robust feature-space distillation objective by up to 5.40 percentage points.
\end{abstract}
\section{Introduction}
While deep neural networks have seen great success across various disciplines, adversarial examples \cite{biggio2023, szegedy2014, goodfellow2015} raise important questions about their \textit{adversarial robustness}, \ie a network's ability to preserve its prediction under small input perturbations. This is of particular concern in safety-critical domains such as autonomous driving and medical diagnostics. Motivated by this, neural network verification methods \cite{katz2017, ehlers2017} aim to provide formal certificates of robustness for a given network. \\ \\
Neural network verifiers broadly fall into two categories, complete verifiers \cite{katz2017, tjeng2019}, which compute exact bounds but have an exponential worst-case runtime, and incomplete verifiers \cite{zhang2018, singh2019}, which rely on convex relaxations to obtain approximate bounds. Modern state-of-the-art verifiers \cite{ferrari2022complete, xu2021fast, wang2021beta, zhang2022general} typically combine the two, using convex relaxations to accelerate complete verification within a branch-and-bound framework \cite{bunel2020branchboundpiecewiselinear}. \\ \\
Since verification only certifies robustness after training, robust training methods are needed to produce networks that are both accurate and certifiable. Training neural networks for adversarial robustness is typically approached either empirically, through adversarial examples, or through certified training, using over-approximations of the network's reachable set over a perturbation region. Adversarial training \cite{madry2018deeplearningmodelsresistant} is an empirical approach that aims to improve a network's adversarial robustness by augmenting the training objective with adversarial examples, thereby optimising a lower bound on the worst-case loss. However, adversarially trained networks are hard to formally verify and often fall short when faced with stronger or adaptive attack strategies \cite{croce2020, tramer2020}. In contrast, certified training directly optimises an upper bound on the worst-case loss, producing networks that are more amenable to verification, often at the cost of standard accuracy. While earlier certified training methods \cite{mirman2018, gowal2018, zhang2020, shi2021fastcertifiedrobusttraining} trained networks against sound upper bounds of the robust loss, recent state-of-the-art methods \cite{mao2023, muller2023certifiedtrainingsmallboxes, palma2024, depalma2026} effectively optimise unsound approximations of the worst-case loss by combining adversarial training with loose IBP-based over-approximations \cite{gowal2018}. Due to IBP's favourable optimisation properties \cite{jovanovic2022, mao2024} and adversarial training's empirical strengths, such certified training methods have been shown to provide state-of-the-art standard-certified accuracy trade-offs in recent benchmarks \cite{mao2025}. \\ \\
Motivated by the success of robust objectives that combine adversarial and certified training, we hypothesise that transferring adversarial knowledge from an empirically robust teacher to a certified student using knowledge distillation could further improve certified performance when paired with IBP bounds. This leads to the following contributions:\par\noindent
\begin{figure*}
\centering
\fbox{\begin{tikzpicture}[
    font=\footnotesize,
    >=Latex,
    node distance=7mm and 8mm,
    every node/.style={align=center},
    box/.style={
        draw,
        rounded corners,
        minimum height=7mm,
        inner sep=3pt
    },
    io/.style={
        box,
        minimum width=11mm,
        fill=white
    },
    teacher/.style={
        box,
        minimum width=20mm,
        fill=blue!6
    },
    student/.style={
        box,
        minimum width=18mm,
        fill=green!10
    },
    cert/.style={
        box,
        minimum width=22mm,
        fill=orange!12
    },
    aux/.style={
        box,
        minimum width=18mm,
        fill=red!10
    },
    loss/.style={
        box,
        minimum width=30mm,
        fill=gray!15
    },
    sum/.style={
        draw,
        circle,
        inner sep=0pt,
        minimum size=5mm
    },
    group/.style={
        draw,
        dashed,
        rounded corners,
        inner sep=4pt
    },
    klloss/.style={
        box,
        minimum width=30mm,
        fill=blue!6
    },
    ibploss/.style={
        box,
        minimum width=30mm,
        fill=orange!10
    },
    img/.style={inner sep=0pt, outer sep=0pt}]
    
    \node[klloss] (klloss)
    {\((1-\alpha)\,\mathcal{L}_{\mathrm{KL}}\!\left(P_{\theta_t}(\mathbf{x})\,\|\,P_{\theta_s}(\mathbf{x_{adv}})\right)\)};
    
    \node[sum, right=10mm of klloss] (plus) {\(+\)};
    
    \node[ibploss, right=10mm of plus] (ibploss)
    {\(\alpha\,\mathcal{L}_{\mathrm{CE}}\!\left(-\underline{\mathbf{z}}^{\Delta}_{\theta_s}(\mathbf{x},y),\,y\right)\)};
    
    \node[loss, above=4mm of plus, minimum width=26mm, fill=gray!15] (total)
    {\(\mathcal{L}_{\mathrm{AD\mbox{-}CERT}}\)};
    
    \draw[->] (klloss) -- (plus);
    \draw[->] (ibploss) -- (plus);
    \draw[->] (plus) -- (total);
    
    \node[io, below left=5mm and 2mm of klloss] (ps)
    {\(P_{\theta_s}(\mathbf{x_{adv}})\)};
    
    \node[io, below=5mm of klloss] (pt)
    {\(P_{\theta_t}(\mathbf{x})\)};
    
    \node[student, below=4mm of ps] (student)
    {Student\\\(f_{\theta_s}\)};
    
    \node[teacher, below=4mm of pt] (teacher)
    {Frozen teacher\\\(f_{\theta_t}\)};
    
    \node[aux, below=4mm of student] (pgd)
    {PGD attack};
    
    \node[img, below=14mm of teacher] (xleftimg) {
        \includegraphics[width=10mm]{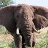}
    };
    
    \node[below=1mm of xleftimg] {\normalsize $\mathbf{x}$};
    
    \node[left=7mm of ps, rotate=90, text=blue!70!black] (emplabel)
    {\scriptsize Adversarial distillation};
    
    \draw[->] (xleftimg.north) |- (pgd.east);
    \draw[->] (xleftimg) -- (teacher);
    \draw[->] (pgd) -- (student);
    \draw[->] (student) -- (ps);
    \draw[->] (teacher) -- (pt);
    \draw[->] (ps) |- (klloss.west);
    \draw[->] (pt) -- (klloss);
    
    \begin{scope}[on background layer]
        \node[group, fit=(emplabel)(ps)(pt)(student)(teacher)(pgd)] {};
    \end{scope}
    
    \node[io, below=5mm of ibploss] (zbound)
    {\(\underline{\mathbf{z}}^{\Delta}_{\theta_s}(\mathbf{x},y)\)};
    
    \node[cert, below=4mm of zbound] (ibpnode)
    {IBP};
    
    \node[student, below=4mm of ibpnode] (ibpstudent)
    {Student\\\(f_{\theta_s}\)};
    
    \node[io, right=43mm of xleftimg] (xbox)
    {\(\mathcal{B}_{\epsilon}(\mathbf{x})=[\mathbf{x}-\epsilon,\mathbf{x}+\epsilon]\)};
    
    \node[left=9mm of zbound, rotate=90, text=orange!80!black] (certlabel)
    {\scriptsize Bound propagation};
    
    \draw[->] (zbound) -- (ibploss);
    \draw[->] (ibpnode) -- (zbound);
    \draw[->] (ibpstudent) -- (ibpnode);
    \draw[->] (xleftimg) -- (xbox);
    \draw[->] (xbox) -- (ibpstudent);
    
    \begin{scope}[on background layer]
        \node[group, fit=(certlabel)(ibpnode)(zbound)(ibpstudent)] {};
    \end{scope}
    \label{tikz:ad-cert-overview}
    \end{tikzpicture}
}
\caption{Overview of AD-CERT (§\ref{sub-sec:ad-cert}) loss formulation, which combines adversarial distillation (blue) with IBP (orange).}
\label{fig:ad-cert-overview}
\end{figure*}
\begin{itemize}
\vspace{-0.4cm}
    \item We introduce \textit{Adversarial Distillation for Certification} (AD-CERT) (§\ref{sub-sec:ad-cert}), a novel certified training objective that combines adversarial logit-level distillation from an empirically robust teacher with IBP bounds (depicted in Figure \ref{fig:ad-cert-overview}).
    \item We show that AD-CERT is a scalarisation between a teacher-guided lower bound surrogate and a certified IBP upper bound on the robust loss (§\ref{sub-sec:lower-bound-surrogate}).
    \item We present extensive experimental evaluations of AD-CERT across standard certified training benchmarks (§\ref{sub-sec:main-results}), showing that it achieves state-of-the-art certified accuracy against current certified training methods. 
    \item We provide a systematic analysis of multiple knowledge distillation approaches for empirical endpoints (§\ref{sub-sec:teacher-student-discussion}), showing that adversarial logit-level distillation preserves teacher robustness more effectively than clean or feature-space distillation over the same architecture, and translates best into certified training when combined with an IBP loss (§\ref{sub-sec:main-results}).
\end{itemize}
\section{Background}
\label{sec:background}
Here, we discuss some necessary background for this work. Let $f_\theta$ denote a multi-class neural network classifier parametrised by $\theta$ such that $f_\theta: \mathbb{R}^{d} \mapsto{\mathbb{R}^C}$ maps an input $\mathbf{x} \in \mathcal{X} \subseteq \mathbb{R}^d$ to a numerical prediction score $[f_\theta(\mathbf{x})]_i$ for each class $i \in \{1, 2, \dots, C\}$. Let $(\mathbf{x}, y) \sim \mathcal{D}$ denote a sample input and true label from a data distribution $\mathcal{D}$ and let $\mathcal{B}_{\varepsilon_p}(\mathbf{x})$ denote a perturbation set based on an $\ell_p$ norm threat model. In this work, as standard with certified training \cite{gowal2018, balunovic2020colt, palma2022, mao2023, muller2023certifiedtrainingsmallboxes, palma2024, depalma2026}, we only consider the
$\ell_\infty$ case, and therefore put $\mathcal{B}_\varepsilon(\mathbf{x})
:= \{\mathbf{x}' : \|\mathbf{x}' - \mathbf{x}\|_\infty \le \varepsilon\}$, where $\mathbf{x}'$ denotes the perturbed input and $\varepsilon$ is the perturbation radius.
\subsection{Neural Network Verification}
A network is said to be \emph{adversarially robust} for an input $\mathbf{x}$ if for all $\mathbf{x'} \in \mathcal{B}_\varepsilon(\mathbf{x})$, we have $[f_\theta(\mathbf{x'})]_i < [f_\theta(\mathbf{x'})]_y \ \forall \ i \ne y$. Let $\mathbf{z}^\Delta_\theta \in \mathbb{R}^{C-1}$ denote the logit-difference between the true class and all other classes, \ie $\mathbf{z}^\Delta_\theta(\mathbf{x}, y) := ([f_\theta(\mathbf{x})]_y -  [f_\theta(\mathbf{x})]_i)_{i\ne y}$. Then, a network's adversarial robustness can be verified by solving:
\begin{equation}
    \min_{\mathbf{x'} \in \mathcal{B}_\varepsilon(\mathbf{x})}\text{ }\min_{i \ne y}{[\mathbf{z}^\Delta_\theta(\mathbf{x'}, y)]}_i,
    \label{eq:rob_ver}
\end{equation}
and checking if the solution to Equation \eqref{eq:rob_ver} is greater than zero. Finding an exact solution to Equation \eqref{eq:rob_ver} was shown to be NP-hard \cite{katz2017}, thus, neural network verifiers either approximate the bounds \cite{zhang2018, singh2019}, or perform complete verification at a larger computational cost \cite{katz2017, tjeng2019, ferrari2022complete, xu2021fast, wang2021beta, zhang2022general}.

\subsection{Training for Robustness}
\label{sub-sec:train-rob}
Standard training of neural network classifiers optimises network parameters $\theta$ by minimising the expected cross-entropy loss, \ie
\begin{equation}
    \min_\theta \mathbb{E}_{(\mathbf{x}, y) \sim \mathcal{D}} \left[ \mathcal{L}_{\mathrm{CE}} \left(f_\theta ( \mathbf{x}), y \right) \right],
    \label{eq:train_std}
\end{equation}
where $\mathcal{L}_{\mathrm{CE}}(f_\theta(\mathbf{x}), y) = \log (1 + \sum_{i\ne y}^C\exp ([f_\theta(\mathbf{x})]_i -  [f_\theta(\mathbf{x})]_y))$. In contrast, 
training for robustness can be viewed as the following min-max objective:
\begin{equation}
    \min_\theta \mathbb{E}_{(\mathbf{x}, y) \sim \mathcal{D}} \left[ \max_{\mathbf{x}' \in \mathcal{B}_\varepsilon( \mathbf{x})} \mathcal{L}_{\mathrm{CE}} \left(f_\theta ( \mathbf{x}'), y \right) \right].
    \label{eq:train_rob}
\end{equation}
Concretely, for an input $\mathbf{x}$, the inner maximisation finds the most malicious perturbation $\mathbf{x}' \in \mathcal{B}_\varepsilon(\mathbf{x})$ that maximises the loss. The outer minimisation seeks the optimal network parameters $\theta$ such that these threats are less effective against the network predictions. Since the inner maximisation problem is non-convex, we generally solve this by under- or over-approximating the worst-case loss. In general, if we let $\mathcal{L}^*_\mathrm{rob}(f_\theta(\mathbf{x}),y) := \displaystyle \max_{\mathbf{x}' \in \mathcal{B}_\varepsilon( \mathbf{x})} \mathcal{L} \left(f_\theta ( \mathbf{x}'), y \right)$ denote a network objective solved for the optimisation problem in Equation \eqref{eq:train_rob}, then we have:
\begin{equation}
    \underline{\mathcal{L}}_\mathrm{rob}(f_\theta(\mathbf{x}),y) \le \mathcal{L}^*_\mathrm{rob}(f_\theta(\mathbf{x}),y) \le \overline{\mathcal{L}}_\mathrm{rob}(f_\theta(\mathbf{x}),y),
    \label{eq:rob-inequality}
\end{equation}
with $\underline{\mathcal{L}}_\mathrm{rob}(f_\theta(\mathbf{x}),y)$ and $\overline{\mathcal{L}}_\mathrm{rob}(f_\theta(\mathbf{x}),y)$ corresponding to adversarial and certified training, respectively, as described below.
\paragraph{Adversarial Training (AT)} AT first approximates a solution to the inner maximisation problem of Equation \eqref{eq:train_rob} by performing an adversarial attack, such as projected gradient descent (PGD) \citep{madry2018deeplearningmodelsresistant}, within $\mathcal{B}_\varepsilon(\mathbf{x})$ for each input $\mathbf{x}$. Following this, the outer minimisation problem of Equation \eqref{eq:train_rob} is solved by augmenting the network's training objective with the adversarial example $\mathbf{x_{adv}} \in \mathcal{B}_\varepsilon(\mathbf{x})$ obtained by the attack. In its standard form, AT aims to minimise:
\begin{equation}
    \mathcal{L}_{\mathrm{AT}} := \mathcal{L}_{\mathrm{CE}} \left(f_\theta ( \mathbf{x_{adv}}), y \right).
\end{equation}
This can be seen as optimising a lower bound of the worst-case loss, \ie $\underline{\mathcal{L}}_\mathrm{rob}(f_\theta(\mathbf{x}), y)$ from Equation \eqref{eq:rob-inequality}. While AT leads to strong empirical robustness, it produces complex networks that are difficult to formally verify and remain susceptible to stronger attacks \cite{tramer2020,croce2020}.
\paragraph{Certified Training and Bound Propagation} Certified training optimises an upper bound of the inner-maximisation problem in Equation \eqref{eq:train_rob} by over-approximating the worst-case loss.
A simple approach is Interval Bound Propagation (IBP) \cite{gowal2018}, leveraging interval arithmetic to approximate the output range of a given layer. Given an input $\mathbf{x}$, IBP propagates the upper and lower bounds of the perturbed input space, $\mathcal{B}_\varepsilon(\mathbf{x})$, (corresponding to $\overline{\mathbf{x}} := \mathbf{x} + \varepsilon$ and $\underline{\mathbf{x}} := \mathbf{x} - \varepsilon$, respectively) through the entire network layer by layer, providing a hyper-rectangle that is monotonically increasing every layer. The final layer output of this propagation, which we denote with \textsc{Box}, is an over-approximation of the possible output space, \ie \textsc{Box} $ \supseteq f_\theta(\mathcal{B}_\varepsilon( \mathbf{x}))$. Hence, using the \textsc{Box} relaxation, we can compute the worst-case logit difference between the true class and all other classes by putting $\mathbf{\underline{z}}_\theta^\Delta(\mathbf{x}, y) := \left(\underline{[f_{\theta}(\mathbf{x}')]}_y - \overline{[f_{\theta}(\mathbf{x}')]}_i \right)_{i \ne y}$ and obtain a robust loss given by:
\begin{equation}
    \mathcal{L}_{\mathrm{IBP}} := \mathcal{L}_{\mathrm{CE}}(-\mathbf{\underline{z}}_{\theta}^\Delta(\mathbf{x},y),y),
\end{equation}
where $\mathcal{L}_{\mathrm{IBP}}$ is a sound upper bound, $\overline{\mathcal{L}}_\mathrm{rob}(f_\theta(\mathbf{x}), y)$, from Equation \eqref{eq:rob-inequality}. Further details on IBP can be found in Appendix \ref{app:ibp}.
\\ \\
Interestingly, alternative certified training methods that utilise tighter relaxations, \eg linear relaxations \cite{wong2018polytope, zhang2020}, consistently yield worse performance than imprecise IBP-based training. \citet{jovanovic2022} attribute this phenomenon to tighter relaxations adding discontinuity and difficulty to the optimisation objective. Moreover, \citet{mao2024} explicitly show that IBP-based training leads to better \emph{propagation tightness}, while non-IBP-based methods do not. Additionally, they note that over-regularisation provides worse standard-certified accuracy trade-offs.
In line with these findings, all current state-of-the-art certified training methods \cite{muller2023certifiedtrainingsmallboxes, mao2023, palma2024, depalma2026} move away from previous over-regularised approaches \cite{gowal2018,balunovic2020colt,palma2022} and instead shift their focus to combining AT with IBP-based training, achieving state-of-the-art standard-certified accuracy trade-offs while preserving certification properties. 
\citet{palma2024} further elucidate this in their definition of \emph{expressive losses} whereby expressivity is attained through interpolation between lower and upper bounds of Equation \eqref{eq:rob-inequality} during training, providing an unsound but favourable solution to Equation \eqref{eq:train_rob}.

\subsection{Knowledge Distillation}
\label{sub-sec:knowledge-dist}
\citet{hinton2015} introduced knowledge distillation as a method for transferring knowledge from a large teacher model, $T_{\theta_t}$, to a smaller, more compact student model, $S_{\theta_s}$. This allows the student to retain much of the teacher's performance while reducing computational cost. Knowledge distillation utilises Kullback-Leibler (KL) divergence to compare the predictive distributions of the student and teacher, which is defined by: 
\begin{equation}
    \mathrm{KL}\!\left(P^\tau_{\theta_t}(\mathbf{x}) \,\|\, P_{\theta_s}^\tau(\mathbf{x})\right) = \sum^C_{i=1} [ P_{\theta_t}^\tau(\mathbf{x})]_i \log \left( \frac{ [ P_{\theta_t}^\tau(\mathbf{x})]_i}{ [ P_{\theta_s}^\tau(\mathbf{x})]_i} \right),
    \label{eq:KL}
\end{equation}
where $\tau$ is a temperature applied to the softmax operation and $P_{\theta_t}^\tau(\mathbf{x}) := \mathrm{softmax}\left(\frac{T_{\theta_t}(\mathbf{x})}{\tau}\right)$, $P_{\theta_s}^\tau(\mathbf{x}) := \mathrm{softmax}\left(\frac{S_{\theta_s}(\mathbf{x})}{\tau}\right)$ are the respective predictive distributions of the teacher and student after softmax. Knowledge distillation then optimises the following objective:
\begin{equation}
    \min_{\theta_s} \mathbb{E}_{(\mathbf{x}, y) \sim \mathcal{D}} \left[ (1-\alpha)\mathcal{L}_{\mathrm{CE}} \left(S_{\theta_s} ( \mathbf{x}), y\right) 
    +
    \alpha \tau^2 \mathrm{KL}\!\left(P_{\theta_t}^\tau(\mathbf{x}) \,\|\, P_{\theta_s}^\tau(\mathbf{x})\right)
    \right],
    \label{eq:KD}
\end{equation}
where $\alpha\in[0,1]$ is a balancing factor between standard training and learning the teacher's predictive distribution. Knowledge distillation has since been applied in the AT setting to boost empirical robustness of a student using an adversarially trained teacher \cite{goldblum2020, zi2021, zhu2022, cui2024}. This was first explored by \citet{goldblum2020}, through adversarially robust distillation (ARD), which injects adversarial examples into the KL term from Equation \eqref{eq:KD}. More specifically:
\begin{equation}
    \mathcal{L}_{\mathrm{ARD}} := (1-\alpha)\mathcal{L}_{\mathrm{CE}} \left(S_{\theta_s} ( \mathbf{x}), y\right) 
    +
    \alpha \tau^2 \mathrm{KL}\!\left(P_{\theta_t}^\tau(\mathbf{x}) \,\|\, P_{\theta_s}^\tau(\mathbf{x_{adv}})\right).
    \label{eq:ARD}
\end{equation}
Additionally, \citet{depalma2026} recently introduced knowledge distillation within the field of certified training, where knowledge is distilled from an empirically robust teacher over the feature space (further details in Appendix \ref{app:cc-dist}).
In the coming section, we build on previous certified training \cite{gowal2018, balunovic2020colt, palma2022, mao2023, muller2023certifiedtrainingsmallboxes, palma2024, depalma2026} and adversarial knowledge distillation \cite{goldblum2020, zi2021} works, presenting a novel certified training objective that distils the predictive distribution of an empirically robust teacher onto a student training for certifiable robustness.

\section{Methodology}
\label{sec:methodology}
In this section, we introduce \emph{Adversarial Distillation for Certification} (AD-CERT), a novel certified training scheme that interpolates between adversarial distillation (AD) and IBP \cite{gowal2018}.
We leverage the key insight that replacing a hard-label adversarial lower bound with a teacher-guided surrogate could further improve certified training. In this way, the empirical teacher shapes the student on concrete adversarial examples, while certification is enforced directly through IBP. Full proofs and additional theoretical results are deferred to Appendices \ref{app:proofs} and \ref{app:additional_results}, respectively. Pseudocode for the complete training procedure is provided in Appendix \ref{app:pseudo} and further details on AD-CERT's computational cost can be found in Appendix \ref{app:complex}, where AD-CERT corresponds to performing IBP and PGD training, with the slight added cost of a single forward pass on the detached teacher.

\subsection{AD-CERT Formulation}
\label{sub-sec:ad-cert}
Going forward, we denote by $f_{\theta_t}$ a fixed teacher network trained via AT, and let $f_{\theta_s}$ be the student model that we aim to certify. Note that both the student and teacher share the same architecture in this distillation setting. Moreover, for all experiments, we use a temperature setting of $\tau = 1$, and henceforth, omit the $\tau$ notation from all softmax operations, writing $P_\theta(\mathbf{x}) := \mathrm{softmax}(f_\theta(\mathbf{x}))$ for the predictive distribution of a network.
Excluding additional regularisation, AD-CERT attains the following loss objective:
\begin{equation}
    \mathcal{L}_{\mathrm{AD\mbox{-}CERT}} := (1-\alpha)\underbrace{{\mathrm{KL}} \left(P_{\theta_t}(\mathbf{x}) \,\|\, P_{\theta_s}(\mathbf{x_{adv}}) \right)}_{\text{Adversarial Distillation}} + 
     \alpha \underbrace{\mathcal{L}_\mathrm{CE}\left(-\mathbf{\underline{z}} _{\theta_s}^\Delta(\mathbf{x}, y), y \right)}_{\text{IBP}}, \quad \alpha \in [0, 1].
    \label{eq:ad_cert_loss}
\end{equation}
The AD component of Equation \eqref{eq:ad_cert_loss} is exactly the KL portion of Equation \eqref{eq:ARD}, where the student is evaluated on an adversarially perturbed input, and tasked to match the clean distribution of an adversarially trained teacher, aiming to preserve standard and empirically robust accuracy. 
On an intuitive level, AD-CERT is designed to keep the teacher signal completely outside of verified bounds, using it purely for its empirical strengths, while leveraging IBP to tighten the worst-case logit differences. This aims to make the student amenable to verification, while still retaining a large proportion of the teacher's adversarial and standard accuracy.

\subsection{Teacher-Guided Lower Bound Surrogates}
\label{sub-sec:lower-bound-surrogate}
In this subsection, we take an analytical view of the objective in Equation \eqref{eq:ad_cert_loss} and formalise the role of AD within certified training. Unlike AT and IBP, which, respectively, provide formal lower and upper bounds from Equation \eqref{eq:rob-inequality}, the AD term in AD-CERT is used as a teacher-guided surrogate for the adversarial lower bound. This is an intentional relaxation of the lower bound objective, using an empirical teacher only to improve the adversarial endpoint, while leaving the verifiable IBP upper bound objective unchanged. We begin by showing that, up to a teacher-dependent constant, the distillation component of AD-CERT is exactly AT with soft teacher labels.
\begin{proposition}[Equivalence of AD and soft-label AT]
Let $\mathbf{x_{adv}}$ be fixed, and let the teacher predictive distribution $P_{\theta_t}(\mathbf{x})$ be fixed. Then the AD objective:
\begin{equation}
    \mathcal{L}_{\mathrm{AD}}
    :=
    \mathrm{KL}\!\left(P_{\theta_t}(\mathbf{x}) \,\|\, P_{\theta_s}(\mathbf{x_{adv}})\right),
\end{equation}
is equal, up to an additive constant independent of the student parameters, to AT with soft teacher labels, namely:
\begin{equation}
    \mathcal{L}_{\mathrm{soft\mbox{-}AT}}
    :=
    \mathcal{L}_\mathrm{CE}\!\left(f_{\theta_s}(\mathbf{x_{adv}}),\, P_{\theta_t}(\mathbf{x})\right).
\end{equation}
Moreover, both objectives induce identical backpropagation updates with respect to the student logits. In particular, for each class index $i$:
\begin{equation}
    \frac{\partial \mathcal{L}_{\mathrm{AD}}}{\partial z_{\theta_s}^i}
    =
    [P_{\theta_s}(\mathbf{x_{adv}})]_i - [P_{\theta_t}(\mathbf{x})]_i
    =
    \frac{\partial \mathcal{L}_{\mathrm{soft\mbox{-}AT}}}{\partial z_{\theta_s}^i}.
\end{equation}
\label{prop:ad-cert-soft-label}
\end{proposition}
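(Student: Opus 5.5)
The plan is to expand the KL divergence directly and identify the student-independent piece. Write $\mathbf{p} := P_{\theta_t}(\mathbf{x})$ for the fixed teacher distribution and $\mathbf{q} := P_{\theta_s}(\mathbf{x_{adv}}) = \mathrm{softmax}(\mathbf{z})$, where $\mathbf{z} := f_{\theta_s}(\mathbf{x_{adv}})$ has entries $z_{\theta_s}^i$. Expanding the logarithm in Equation \eqref{eq:KL} (with $\tau=1$) gives
\begin{equation*}
\mathrm{KL}(\mathbf{p}\,\|\,\mathbf{q}) = \sum_{i=1}^C p_i \log p_i - \sum_{i=1}^C p_i \log q_i = -H(\mathbf{p}) + \mathcal{L}_\mathrm{CE}(\mathbf{z},\mathbf{p}).
\end{equation*}
Here the soft-label cross-entropy is read as $\mathcal{L}_\mathrm{CE}(\mathbf{z},\mathbf{p}) := -\sum_i p_i \log [\mathrm{softmax}(\mathbf{z})]_i$, which generalises the hard-label form of Section~\ref{sub-sec:train-rob}: for $\mathbf{p}=\mathbf{e}_y$ it reduces to $-\log q_y = \log\bigl(1+\sum_{i\ne y}\exp(z_i - z_y)\bigr)$, and I will state this check explicitly. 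Since $\mathbf{p}$ and $\mathbf{x_{adv}}$ are fixed by hypothesis, the entropy $H(\mathbf{p})$ does not depend on $\theta_s$, which gives $\mathcal{L}_{\mathrm{AD}} = \mathcal{L}_{\mathrm{soft\mbox{-}AT}} - H(\mathbf{p})$.

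For the gradient claim, the first equality in the statement follows immediately from the second once the constant is seen to be $\theta_s$-independent. So it suffices to differentiate $\mathcal{L}_{\mathrm{soft\mbox{-}AT}}$ with respect to $z_{\theta_s}^i$. Using $\log q_j = z_j - \log\sum_k e^{z_k}$, we get
\begin{equation*}
\mathcal{L}_{\mathrm{soft\mbox{-}AT}} = -\sum_j p_j z_j + \Bigl(\sum_j p_j\Bigr)\log\sum_k e^{z_k}.
\end{equation*}
Differentiating term by term yields $-p_i + (\sum_j p_j)\, q_i$. Invoking $\sum_j p_j = 1$, because $\mathbf{p}$ is a softmax output, gives $q_i - p_i$, as claimed.

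Backpropagation to $\theta_s$ then follows by the chain rule through the shared map $\theta_s \mapsto \mathbf{z}$. The subtle step is that $\mathbf{x_{adv}}$ must be treated as fixed, i.e.\ detached, rather than as a function of $\theta_s$ via PGD; the same holds for the teacher output. Both are guaranteed by the hypotheses, and they are also what makes the additive constant genuinely independent of the student parameters.

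The main obstacle is mostly definitional. One must fix the meaning of $\mathcal{L}_\mathrm{CE}$ with a distribution as the second argument consistently with the paper's hard-label definition, and note where normalisation of $\mathbf{p}$ is used. Without $\sum_j p_j = 1$, the gradient would carry an extra factor on $q_i$.
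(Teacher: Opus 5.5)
Your proposal is correct and follows essentially the paper's route: split the KL into the student-independent teacher negative entropy plus the soft-label cross-entropy, then differentiate and use $\sum_j p_j = 1$ to obtain $q_i - p_i$. The only cosmetic difference is that you differentiate through the log-sum-exp form, whereas the paper uses the softmax Jacobian $\partial P^j/\partial z^i = P^j(\delta_{ij} - P^i)$; both give the same gradient.
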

\begin{proof}
    See Appendix \ref{app:proofs}.
\end{proof}
\noindent
Given Proposition \ref{prop:ad-cert-soft-label}, we have shown that the distillation component of AD-CERT preserves AT at the gradient level, while being conditioned by an empirical teacher. We further elucidate this connection through an analytical view of our loss function from Equation \eqref{eq:ad_cert_loss}.
\begin{proposition}[Analytical form of AD-CERT]
Let $H(P)$ and $H(Q,P)$ denote the entropy and cross entropy respectively for probability distributions $P$ and $Q$. That is:
\begin{equation}
    H(P) =  -\sum_{i=1}^C p_i\log(p_i), \qquad H(Q,P) = -\sum_{i=1}^C q_i\log(p_i).
\end{equation}
Then, $\mathcal{L}_{\mathrm{AD\mbox{-}CERT}}$ from Equation \eqref{eq:ad_cert_loss} can be re-written as:
\begin{equation}
    \mathcal{L}_{\mathrm{AD\mbox{-}CERT}}
    =
    (1-\alpha)H(P_{\theta_t}(\mathbf{x}),P_{\theta_s}(\mathbf{x_{adv}}))
    +
    \alpha \mathcal{L}_{\mathrm{IBP}}
    -
    (1-\alpha)H(P_{\theta_t}(\mathbf{x})).
\end{equation}
Therefore, up to a constant independent of $\theta_s$, AD-CERT is a scalarisation between
(i) a soft-label adversarial lower bound surrogate and
(ii) a certified IBP upper bound.
\label{prop:ad-analytical}
\end{proposition}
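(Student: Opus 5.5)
The plan is to expand the KL term in Equation \eqref{eq:ad_cert_loss} using the standard decomposition $\mathrm{KL}(Q\,\|\,P) = H(Q,P) - H(Q)$, and then substitute back. Write $Q := P_{\theta_t}(\mathbf{x})$ and $P := P_{\theta_s}(\mathbf{x_{adv}})$, with entries $q_i$ and $p_i$. From the definition in Equation \eqref{eq:KL} with $\tau = 1$, split the logarithm of the ratio:
\begin{equation*}
\mathrm{KL}(Q\,\|\,P) = \sum_{i=1}^C q_i \log q_i - \sum_{i=1}^C q_i \log p_i = -H(Q) + H(Q,P).
\end{equation*}
Multiplying by $(1-\alpha)$ and adding the unchanged term $\alpha \mathcal{L}_{\mathrm{IBP}}$ gives the claimed identity directly. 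The only care needed is the convention $0\log 0 = 0$ for classes with $q_i = 0$. The softmax guarantees $p_i > 0$, so every term is finite.

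For the interpretive claim, I will argue as follows. The teacher is frozen, and $P_{\theta_t}(\mathbf{x})$ depends only on $\theta_t$ and the clean input, so $(1-\alpha)H(P_{\theta_t}(\mathbf{x}))$ is constant in $\theta_s$. Its gradient with respect to the student vanishes. Next, $H(P_{\theta_t}(\mathbf{x}), P_{\theta_s}(\mathbf{x_{adv}}))$ is exactly $\mathcal{L}_{\mathrm{CE}}(f_{\theta_s}(\mathbf{x_{adv}}), P_{\theta_t}(\mathbf{x}))$, the soft-label AT loss of Proposition \ref{prop:ad-cert-soft-label}. That proposition already identifies it as AT with teacher labels evaluated at the adversarial point, so it plays the role of the lower-bound surrogate in Equation \eqref{eq:rob-inequality}. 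Finally, $\mathcal{L}_{\mathrm{IBP}}$ is the sound upper bound $\overline{\mathcal{L}}_{\mathrm{rob}}$ from Section \ref{sub-sec:train-rob}. Hence, up to a $\theta_s$-independent constant, the loss is a convex combination (scalarisation) of these two objectives, with weights $1-\alpha$ and $\alpha$ for $\alpha\in[0,1]$.

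There is no real obstacle here. The one step requiring precision is making clear what "lower bound surrogate" means. When $P_{\theta_t}(\mathbf{x})$ equals the one-hot label $e_y$, the soft cross-entropy reduces to $\mathcal{L}_{\mathrm{AT}}$, which is a genuine lower bound of $\mathcal{L}^*_{\mathrm{rob}}$ because $\mathbf{x_{adv}}\in\mathcal{B}_\varepsilon(\mathbf{x})$. For general teacher distributions it is only a surrogate, not a formal bound. I would state this explicitly rather than claim a bound. I would also note that the "constant" is constant per sample, not globally, which suffices for gradient-based training.
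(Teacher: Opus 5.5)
Your proposal is correct and takes the same approach as the paper: split the logarithm in the KL term to get $H(P_{\theta_t}(\mathbf{x}),P_{\theta_s}(\mathbf{x_{adv}})) - H(P_{\theta_t}(\mathbf{x}))$, then substitute into Equation \eqref{eq:ad_cert_loss}. Your extra remarks (the teacher entropy is a per-sample constant, and the soft cross-entropy is only a surrogate except in the one-hot limit) are consistent with the paper's interpretation, and the $0\log 0$ caveat is unnecessary because the teacher distribution is itself a softmax and hence strictly positive.
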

\begin{proof}
    See Appendix \ref{app:proofs}.
\end{proof}
\noindent
Given Proposition \ref{prop:ad-analytical}, AD-CERT fits naturally within the broader class of modern certified training objectives \cite{muller2023certifiedtrainingsmallboxes, mao2023, palma2024, depalma2026} that combine empirical and certifiable endpoints. Its distinguishing feature is that the adversarial endpoint is no longer a formal lower bound of Equation \eqref{eq:train_rob}, but a relaxed surrogate induced by AD from an empirically robust teacher. Additionally, this positions AD-CERT as distinct and complementary to the recent work of CC-DIST \cite{depalma2026}, which computes both IBP bounds and adversarial examples within their robust feature-space distillation objective. Instead, AD-CERT integrates teacher distillation directly into the lower bound endpoint, while keeping IBP bounds purely for the certified endpoint.
\\ \\
While Proposition \ref{prop:ad-analytical} characterises AD-CERT at the level of its objective, the next result highlights a key geometric difference between soft-label and hard-label adversarial supervision. In particular, the AD branch of AD-CERT induces a finite target in the logit space, whereas standard AT only approaches its optimum as the correct class margin grows indefinitely.
\begin{lemma}
For AD, the optimal logit margin, $\Delta z_{ij}^*$, between any two classes $i,j$ is:
\begin{equation}
\Delta z_{ij}^* = \log\left(\frac{[P_{\theta_t}(\mathbf{x})]_i}{[P_{\theta_t}(\mathbf{x})]_j}\right).
\end{equation}
Moreover, standard AT has no finite minimiser, and its infimum $0$ is approached only as $[f_{\theta_s}(\mathbf{x_{adv}})]_y-[f_{\theta_s}(\mathbf{x_{adv}})]_j\to+\infty, \quad \forall j\neq y$.
\label{lemma:logit-margin}
\end{lemma}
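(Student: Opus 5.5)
We work in terms of the student logits $\mathbf{z} := f_{\theta_s}(\mathbf{x_{adv}}) \in \mathbb{R}^C$ and treat them as free variables. This is the natural setting for "optimal logit margin". We also write $q := P_{\theta_t}(\mathbf{x})$ and $p(\mathbf{z}) := \mathrm{softmax}(\mathbf{z})$.

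For the AD part, Proposition \ref{prop:ad-cert-soft-label} gives the gradient $\partial \mathcal{L}_{\mathrm{AD}}/\partial z_i = p_i(\mathbf{z}) - q_i$. The KL objective is a composition of the convex function $\mathbf{z} \mapsto \mathrm{logsumexp}(\mathbf{z}) - \sum_i q_i z_i$ (this equals $H(q,p)$) with a constant offset. It is therefore convex in $\mathbf{z}$, so stationary points are exactly the global minimisers. Setting the gradient to zero gives $p(\mathbf{z}^*) = q$. A minimiser exists, since we can take $z_i^* = \log q_i + c$, provided the teacher's softmax is strictly positive, which it always is because softmax outputs are positive.

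Any minimiser satisfies $e^{z_i^*}/\sum_k e^{z_k^*} = q_i$ for all $i$. Dividing the $i$-th equation by the $j$-th cancels the normaliser and gives $z_i^* - z_j^* = \log(q_i/q_j)$. This is exactly the claimed $\Delta z_{ij}^*$. It is also well defined and finite, and the minimisers are unique up to the shift invariance $\mathbf{z} \mapsto \mathbf{z} + c\mathbf{1}$, which leaves every margin unchanged. Alternatively, we note that $\mathrm{KL}(q\|p) \ge 0$ with equality iff $p = q$ (Gibbs' inequality), which gives the minimiser without any gradient computation.

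For standard AT, we write $\mathcal{L}_{\mathrm{AT}} = \log\bigl(1 + \sum_{j \ne y} \exp(-m_j)\bigr)$ with margins $m_j := z_y - z_j$, as in the cross-entropy definition of Section \ref{sub-sec:train-rob}. Each summand $\exp(-m_j)$ is strictly positive, so $\mathcal{L}_{\mathrm{AT}} > \log 1 = 0$ for every finite $\mathbf{z}$. Also, $\mathcal{L}_{\mathrm{AT}} \to 0$ when all $m_j \to +\infty$, so the infimum is $0$ and it is not attained; hence there is no finite minimiser. For the "only as" direction, suppose a sequence $\mathbf{z}^{(n)}$ has $\mathcal{L}_{\mathrm{AT}}(\mathbf{z}^{(n)}) \to 0$. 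Then $\sum_{j \ne y} e^{-m_j^{(n)}} \to 0$ by continuity of $\log$ at $1$. Each summand is nonnegative, so $e^{-m_j^{(n)}} \to 0$ for each $j$, which means $m_j^{(n)} \to +\infty$ for every $j \ne y$.

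The main obstacle is interpretive rather than technical. We must make precise that "optimal" refers to optimisation over unconstrained logits, not over $\theta_s$ (the architecture may not realise arbitrary logits). We must also handle the shift invariance so that the margin, not the logit vector, is the uniquely determined quantity. We state both conventions explicitly at the start of the proof.
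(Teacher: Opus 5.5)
Your proof is correct and follows the same route as the paper's: you set the AD gradient $[P_{\theta_s}(\mathbf{x_{adv}})]_i-[P_{\theta_t}(\mathbf{x})]_i$ to zero, take ratios of the resulting softmax equations to get $z_i^*-z_j^*=\log(q_i/q_j)$, and write $\mathcal{L}_{\mathrm{AT}}=\log\bigl(1+\sum_{j\neq y}e^{-m_j}\bigr)>0$, which tends to $0$ as the margins grow. It is in fact slightly more complete than the paper's argument: you justify that the stationary point is a global minimiser (via convexity of $\mathrm{logsumexp}(\mathbf{z})-\sum_i q_i z_i$, or Gibbs' inequality), which the paper only reads off from the gradient, and you prove the necessity half of ``only as'' (loss $\to 0$ forces every $m_j\to+\infty$), whereas the paper's bound $\log\bigl(1+(C-1)e^{-m}\bigr)$ only gives sufficiency; the one small slip is that this necessity step uses continuity of $\exp$ at $0$, not of $\log$ at $1$.
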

\begin{proof}
    See Appendix \ref{app:proofs}.
\end{proof}
\noindent
Lemma \ref{lemma:logit-margin} suggests that teacher-guided soft-label supervision may be a principled approach to enhance certified training, since it arguably induces a smoother lower bound objective, not pushing for arbitrarily large class margins. We evaluate our hypotheses experimentally in the next section.
\section{Experimental Evaluation}
\label{sec:evaluation}
In this section, we conduct experiments to assess the effectiveness of our proposed certified training method, AD-CERT (§\ref{sub-sec:ad-cert}). Across our experiments, we report standard, certified and \textsc{AutoAttack} (AA) \cite{croce2020} accuracy. Certified accuracy is the primary metric used in certified training literature \cite{gowal2018, balunovic2020colt, palma2022, muller2023certifiedtrainingsmallboxes, mao2023, palma2024, mao2025, depalma2026}, defined as the proportion of the validation set for which a verifier proves robustness within  $\mathcal{B}_\varepsilon(\mathbf{x})$. Standard accuracy is the clean accuracy on the validation set without perturbation. AA accuracy refers to the proportion of the validation set that could not be attacked by the AA strategy. As such, AA accuracy is strictly greater than or equal to certified accuracy. 
\subsection{Settings}
We implement AD-CERT on top of CTBench \cite{mao2025}, a unified certified training library using PyTorch \cite{paszke2019}. We adopt three datasets, MNIST \cite{lecun2010mnist}, CIFAR-10 \cite{krizhevsky2009} and TinyImageNet \cite{le2015tiny}. We train on the corresponding training sets and certify on the validation sets. To align with recent literature \cite{shi2021fastcertifiedrobusttraining, muller2023certifiedtrainingsmallboxes, mao2023, palma2024, mao2025, depalma2026}, we use a 7-layer convolutional network ($\mathrm{CNN7}$) for both the teacher and student models, and the perturbation radii applied are $\varepsilon = \{0.1, 0.3\}$ for MNIST, $\varepsilon = \{\frac{2}{255}, \frac{8}{255}\}$ for CIFAR-10 and $\varepsilon = \{\frac{1}{255}\}$ for TinyImageNet. For verification, we use the $\alpha$,$\beta$-CROWN library \cite{zhang2018,xu2020,xu2021fast,wang2021beta}, which combines linear bound propagation with branch-and-bound \cite{bunel2020branchboundpiecewiselinear} for complete neural network verification. Full details on training of both student and teacher models, certification and the experimental setup can be found in Appendix \ref{app:exp-details}.
\subsection{Comparison with State-of-the-Art}
\label{sub-sec:main-results}
Table \ref{tab:evaluation_results} compares the performance of AD-CERT (§\ref{sub-sec:ad-cert}) to current state-of-the-art certified training methods \cite{gowal2018, zhang2022, muller2023certifiedtrainingsmallboxes, mao2023, palma2024, depalma2026}. For each method, we report the standard and certified accuracy on the validation set. When available, we use the benchmark results from CTBench \cite{mao2025}, as they generally report the strongest certified accuracies of each reported certified training method compared to the original paper \cite{gowal2018, zhang2022, muller2023certifiedtrainingsmallboxes, mao2023, palma2024, depalma2026}, while also providing a fair comparison to AD-CERT, which is built on top of the unified library. \\ \\
Overall, AD-CERT achieves state-of-the-art certified accuracy for ReLU-based architectures across all evaluated settings, while retaining competitive standard accuracy. On MNIST, AD-CERT obtains modest but consistent improvements over prior methods at both perturbation radii. While the gains are small, this is expected given the relative simplicity of MNIST and the already saturated performance of existing methods. For CIFAR-10 and TinyImageNet, AD-CERT improves the best prior ReLU-based certified accuracy by an average of $0.37$ percentage points, with the largest gain observed at $\varepsilon=\frac{2}{255}$ on CIFAR-10. 
For CIFAR-10 at $\varepsilon=\frac{8}{255}$, 
SORTNET \cite{zhang2022} obtains higher certified accuracy, however, it relies on a specialised 1-Lipschitz architecture known to perform particularly well on this setting. We therefore distinguish it from the ReLU-based methods, with further discussion in related work (§\ref{sec:related_work}). On TinyImageNet, AD-CERT achieves the highest certified accuracy among all compared methods, while substantially improving standard accuracy compared to previous non-distillation-based certified training methods \cite{gowal2018, muller2023certifiedtrainingsmallboxes, mao2023, palma2024}. This suggests that AD-CERT scales favourably beyond the smaller MNIST and CIFAR-10 settings, where certified training is often closer in performance. An interesting comparison is that of AD-CERT and CC-DIST \cite{depalma2026}, the only distillation-based certified training methods. In general, CC-DIST retains higher standard accuracy, while AD-CERT obtains higher certified accuracy. We investigate this trade-off further through a controlled comparison in Table \ref{tab:cert_distillation_results}.
\begin{table}[h!]
\centering
\caption{\small Comparison of AD-CERT with prior state-of-the-art certified training methods. Bold indicates the best standard or certified accuracy on each benchmark. When the best ReLU-based architecture result differs from the best overall result, the top ReLU-based accuracy is bold-underlined.}
\label{tab:evaluation_results}
\footnotesize
\begin{tabular}{l c l c c c}
\toprule
Dataset & $\varepsilon$ & Method & Source & Std. acc. [\%] & Cert. acc. [\%] \\ \midrule
\multirow{14}{*}{MNIST} 
& \multirow{6}{*}{$0.1$} 
& AD-CERT & Ours & \textbf{99.25} & \textbf{98.53} \\ \cmidrule{3-6}
& & IBP & \citet{mao2025} & 98.87 & 98.26 \\
& & SORTNET$^\dagger$ & \citet{zhang2022} & 99.01 & 98.14 \\
& & SABR & \citet{mao2025} & 99.08 & 98.40 \\
& & TAPS/STAPS$^\ddagger$ & \citet{mao2025} & 99.16 & 98.52 \\
& & MTL-IBP & \citet{mao2025} & 99.18 & 98.37
\\ \cmidrule{2-6}
& \multirow{6}{*}{$0.3$} 
& AD-CERT & Ours & \textbf{98.85} & \textbf{93.98} \\ 
\cmidrule{3-6}
& & IBP & \citet{mao2025} & 98.54 & 93.80 \\
& & SORTNET$^\dagger$ & \citet{zhang2022} & 98.46 & 93.40 \\
& & SABR & \citet{mao2025} & 98.66 & 93.68 \\
& & TAPS/STAPS$^\ddagger$ & \citet{mao2025} & 98.56 & 93.95 \\
& & MTL-IBP & \citet{mao2025} & 98.74 & 93.90
\\ \midrule
\multirow{16}{*}{CIFAR-10} 
& \multirow{7}{*}{$\frac{2}{255}$} 
& AD-CERT & Ours & 79.21 & \textbf{65.08} \\ \cmidrule{3-6}
& & IBP & \citet{mao2025} & 67.49 & 55.99 \\
& & SORTNET$^\dagger$ & \citet{zhang2022} & 67.72 & 56.94 \\
& & SABR & \citet{mao2025} & 77.86 & 63.61 \\
& & TAPS/STAPS$^\ddagger$ & \citet{mao2025} & 77.05 & 64.21 \\
& & MTL-IBP & \citet{mao2025} & 78.82 & 64.41 \\
& & CC-DIST & \citet{depalma2026} & \textbf{81.55} & 64.60
\\ \cmidrule{2-6}
& \multirow{7}{*}{$\frac{8}{255}$} 
& AD-CERT & Ours & 53.69 & \textbf{\underline{35.84}} \\ \cmidrule{3-6}
& & IBP & \citet{mao2025} & 48.51 & 35.28 \\
& & SORTNET$^\dagger$ & \citet{zhang2022} & 54.84 & \textbf{40.39} \\
& & SABR & \citet{mao2025} & 52.71 & 35.34 \\
& & TAPS/STAPS$^\ddagger$ & \citet{mao2025} & 49.96 & 35.25 \\
& & MTL-IBP & \citet{mao2025} & 54.28 & 35.41 \\
& & CC-DIST & \citet{depalma2026} & \textbf{55.13} & 35.52
\\ \midrule
\multirow{8}{*}{TinyImageNet} 
& \multirow{8}{*}{$\frac{1}{255}$} 
& AD-CERT & Ours & 40.06 & \textbf{28.19} \\ \cmidrule{3-6}
& & IBP & \citet{mao2025} & 26.77 & 19.82 \\
& & SORTNET$^\dagger$ & \citet{zhang2022} & 25.69 & 18.18 \\
& & SABR & \citet{mao2025} & 30.58 & 20.96 \\
& & TAPS/STAPS$^\ddagger$ & \citet{mao2025} & 30.63 & 22.31 \\
& & MTL-IBP & \citet{mao2025} & 35.97 & 27.73 \\ 
& & CC-DIST & \citet{depalma2026} & \textbf{43.78} & 27.88
\\ \bottomrule
\addlinespace
\multicolumn{6}{l}{\footnotesize 
    \parbox{0.95\textwidth}{
        $^\dagger$ Non-ReLU-based architecture. \\
        $^\ddagger$ We report the stronger of TAPS and STAPS based on certified accuracy.
    }
}
\end{tabular}
\vspace{-0.5cm}
\end{table}
\paragraph{Comparison of Distillation-based Certified Training Methods}
Table \ref{tab:cert_distillation_results} compares AD-CERT against alternative distillation-based certified training methods under the exact same experimental setup for a direct comparison. This controlled comparison allows us to isolate the effect of the distillation mechanism itself by training and evaluating all methods in the CTBench library \cite{mao2025} with the same teacher model, training pipeline and verifier. We compare AD-CERT with CC-DIST \cite{depalma2026}, which performs robust feature-space distillation. Additionally, we introduce RSLAD-CERT, an alternative logit-level objective obtained by replacing the adversarial distillation term in AD-CERT with RSLAD \cite{zi2021}, but retaining the same IBP interpolation, \ie $L_{\mathrm{RSLAD\mbox{-}CERT}}:= (1-\alpha)L_{\mathrm{RSLAD}} + \alpha L_{\mathrm{IBP}}$. For all three methods, we use the exact same hyperparameter values as those used for AD-CERT in Table \ref{tab:evaluation_results}, which we report in detail in Appendix \ref{sub-sec:train_det}, Table \ref{tab:adcert_hyperparameters}. The only exception is $w_{rob}$ for TinyImageNet, where we used an earlier untuned value of $0.7$, as per MTL-IBP's \cite{palma2024} reported value in CTBench \cite{mao2025}.
\\ \\ 
Overall, the results support the design choice of AD-CERT, showing that directly distilling the teacher's predictive distribution at the adversarial endpoint provides a simple and effective empirical branch for certified training. The addition of the clean distillation term in RSLAD-CERT reduces certifiability by $0.6$ percentage points on average, while only improving standard accuracy by an average of $0.38$ percentage points compared to AD-CERT. Consistent with the pattern observed in Table \ref{tab:evaluation_results}, CC-DIST is generally strongest at transferring standard accuracy, but shows weaker certified accuracy under the controlled setup. This results in worse standard-certified accuracy trade-offs, with CC-DIST improving standard accuracy over AD-CERT by an average of $1.69$ percentage points, but reducing certifiability by $3.52$ percentage points. Both logit-level objectives, \ie RSLAD-CERT and AD-CERT, improve certified accuracy over the feature-space distillation objective of CC-DIST, despite not matching the formal definition of expressivity \cite{palma2024}. One possible explanation is that CC-DIST induces a more regularised objective, since it computes adversarial examples and IBP bounds during distillation, before being coupled with another expressive loss CC-IBP \cite{palma2024}. This suggests that tightly coupling distillation-based certified training objectives with the formal definition of expressivity may not translate into stronger certification properties. In contrast, AD-CERT keeps the distillation branch as a pure adversarial endpoint, while leaving the certified component as the standard IBP loss. This relaxes the definition of expressivity by extending it to surrogate lower bounds (§\ref{sub-sec:lower-bound-surrogate}), prioritising simplicity and smoothness in the loss objective. Finally, we acknowledge that further exhaustive hyperparameter and teacher tuning on CC-DIST could potentially reduce the gap between the results reported in Table \ref{tab:cert_distillation_results} and those in Table \ref{tab:evaluation_results}. However, the most important shared hyperparameters, namely the IBP coefficient $\alpha$ and $L_1$ regularisation, coincide with the original CC-DIST setup \cite{depalma2026} (Table 3). The main remaining difference is therefore the teacher model. For completeness, we include an additional ablation on the sensitivity of AD-CERT and CC-DIST to different teachers in Appendix \ref{app:additional-experiments}, which shows a similar trend to the above ablation.
\begin{table}[h]
\centering
\caption{Comparison of AD-CERT against distillation-based certified training methods under the same experimental setup.}
\label{tab:cert_distillation_results}
\footnotesize
\begin{tabular}{l c l c c c c c}
\toprule
\multicolumn{1}{c}{Dataset} 
& $\varepsilon$ 
& \multicolumn{1}{c}{Method} 
& \multicolumn{1}{c}{Method source$^\dagger$} 
& Dist. space 
& Std. [\%] 
& AA [\%] 
& Cert. [\%] \\ 
\cmidrule{1-8}
\multirow{6}{*}{CIFAR-10} 
& \multirow{3}{*}{$\frac{2}{255}$} 
& AD-CERT & Ours & Logits & 79.21 & 68.38 & \textbf{65.20} \\
& & RSLAD-CERT & Ours & Logits & 79.87 & 68.27 & 64.36 \\
& & CC-DIST & \citet{depalma2026} & Features & \textbf{81.04} & \textbf{69.97} & 59.80 \\ 
\cmidrule{2-8}
& \multirow{3}{*}{$\frac{8}{255}$} 
& AD-CERT & Ours & Logits & 53.49 & 36.46 & \textbf{35.60} \\
& & RSLAD-CERT & Ours & Logits & 53.95 & \textbf{36.50} & 35.19 \\
& & CC-DIST & \citet{depalma2026} & Features & \textbf{54.32} & 35.69 & 34.55 \\ 
\midrule
\multirow{3}{*}{TinyImageNet} 
& \multirow{3}{*}{$\frac{1}{255}$} 
& AD-CERT & Ours & Logits & 41.38 & 30.68 & \textbf{27.68} \\
& & RSLAD-CERT & Ours & Logits & 41.39 & 30.90 & 27.14 \\
& & CC-DIST & \citet{depalma2026} & Features & \textbf{43.79} & \textbf{32.66} & 23.57 \\ 
\bottomrule
\end{tabular}
\begin{minipage}{\linewidth}
\footnotesize $^\dagger$ The ``Method source'' column indicates the origin of the training objective, all results are from our implementation and evaluation in the CTBench library \cite{mao2025}.
\end{minipage}
\end{table}
\subsection{Ablations \& Further Discussions}
\paragraph{Teacher-Student Comparison}
\label{sub-sec:teacher-student-discussion}
Table \ref{tab:student_teacher_comparison} compares each AD-CERT student with its corresponding empirically robust teacher. Overall, AD-CERT converts models that are empirically robust but not certifiable into students with strong certified accuracy, while retaining a substantial fraction of the teacher's standard and adversarial performance. This effect is particularly clear on MNIST, where AD-CERT stays within one percentage point of the teacher in both standard and AA accuracy across both perturbation radii, while introducing strong certifiability. On TinyImageNet, the teacher-student gap remains moderate, with losses below seven percentage points in standard accuracy and close to two percentage points in AA accuracy, though notably achieving $28.19\%$ in certified accuracy. The largest gap occurs on CIFAR-10 at $\varepsilon_\infty=\frac{8}{255}$, where the student sees a decrease of almost twenty percentage points in standard accuracy relative to the teacher, although the drop in AA accuracy is much smaller. Nevertheless, AD-CERT still achieves the strongest certified accuracy among ReLU-based architectures in this setting. However, this highlights that closing the remaining gap between empirically robust teachers and certifiably robust students at larger perturbation radii remains an important direction for future work. Motivated by this observation, we next ablate different pure AD techniques on CIFAR-10, aiming to better understand the difficulty of transferring empirical robustness into certifiable robustness and to identify potential directions for stronger distillation-based certified training objectives in future.
\begin{table}[!h]
\centering
\caption{Comparison between teacher and student models of AD-CERT for each dataset entry in Table \ref{tab:evaluation_results}. Deltas are reported relative to the teacher at the same perturbation radius.}
\label{tab:student_teacher_comparison}
\footnotesize
\begin{tabular}{l c l c c c c c}
\toprule
\multicolumn{1}{c}{Dataset} 
& $\varepsilon_\infty$ 
& \multicolumn{1}{c}{Model} 
& Std. acc. [\%] 
& $\Delta$ Std. 
& AA acc. [\%] 
& $\Delta$ AA 
& Cert. acc. [\%] \\
\cmidrule{1-8}
\multirow{4}{*}{MNIST} 
& \multirow{2}{*}{$0.1$} 
& Teacher & 99.47 & -- & 98.97 & -- & $\approx0^\dagger$ \\
& & Student & 99.25 & \negv{-0.22} & 98.55 & \negv{-0.42} & 98.53 \\
\cmidrule{2-8}
& \multirow{2}{*}{$0.3$} 
& Teacher & 99.43 & -- & 93.83 & -- & $\approx0^\dagger$ \\
& & Student & 98.85 & \negv{-0.58} & 94.64 & \pos{0.81} & 93.98 \\
\midrule
\multirow{4}{*}{CIFAR-10} 
& \multirow{2}{*}{$\frac{2}{255}$} 
& Teacher & 88.67 & -- & 72.41 & -- & $\approx0^\dagger$ \\
& & Student & 79.21 & \negv{-9.46} & 68.39 & \negv{-4.02} & 65.08 \\
\cmidrule{2-8}
& \multirow{2}{*}{$\frac{8}{255}$} 
& Teacher & 73.55 & -- & 41.99 & -- & $\approx0^\dagger$ \\
& & Student & 53.69 & \negv{-19.86} & 36.69 & \negv{-5.30} & 35.84 \\
\midrule
\multirow{2}{*}{TinyImageNet} 
& \multirow{2}{*}{$\frac{1}{255}$} 
& Teacher & 46.78 & -- & 33.16 & -- & $\approx0^\dagger$ \\
& & Student & 40.06 & \negv{-6.72} & 31.14 & \negv{-2.02} & 28.19 \\
\bottomrule
\multicolumn{8}{l}{\footnotesize $^\dagger$ None of the first 10 samples were verified within a 1000 second time-limit.}
\end{tabular}
\end{table}
\paragraph{Exploring Adversarial Distillation Techniques}
Table \ref{tab:dist_results} compares several distillation objectives on CIFAR-10 \cite{krizhevsky2009} after removing any certified training component and trained against a PGD-trained teacher on the same architecture. This ablation isolates the empirical distillation endpoint, allowing us to study which teacher-student objectives best preserve the robustness of the PGD teacher before introducing certifiability. Natural knowledge distillation \cite{hinton2015}, denoted Nat-KL, transfers the teacher's clean predictive distribution to the student. AD corresponds to removing the IBP term from AD-CERT (§\ref{sub-sec:ad-cert}), while RSLAD \cite{zi2021} extends adversarially robust distillation \cite{goldblum2020} using soft-labels. Finally, adversarial feature-space knowledge distillation (AF-KD) corresponds to setting $\alpha=0$ in CC-DIST \cite{depalma2026}, thereby removing the IBP components from the objective. For RSLAD and AF-KD, we use $\lambda=\frac{5}{6}$ and $\beta=\frac{5}{k}$, respectively, following the suggestions of the original works \cite{zi2021, depalma2026}, where $k$ denotes the dimensionality of the feature space.  \\ \\
From the results, it can be observed that Nat-KL improves standard accuracy at both perturbation radii, but dramatically fails to transfer adversarial robustness, losing more than $41$ percentage points in AA accuracy in both settings. This confirms that matching an adversarially trained teacher only on clean inputs is insufficient for transferring adversarial robustness. In contrast, objectives that expose the student to adversarial inputs preserve teacher-level AA accuracy much more effectively. Across both perturbation radii, AD and RSLAD remain close to the teacher's AA accuracy, while AF-KD is competitive at the smaller radius but degrades more noticeably at $\varepsilon_\infty=\frac{8}{255}$. Among these methods, AD provides the most direct and consistently effective adversarial endpoint, improving teacher-level AA accuracy at the smaller radius and remaining within one percentage point at the larger radius. These results further motivate our use of AD as the empirical endpoint of AD-CERT, since it provides a simple but effective mechanism for retaining adversarial accuracy from an empirically robust teacher, while remaining straightforward to combine with an IBP-certified upper bound. This yields a loss objective that resembles previous state-of-the-art expressive losses \cite{palma2024}, but in a distillation setting with lower bound surrogates, as discussed in §\ref{sub-sec:lower-bound-surrogate}.
\begin{table}[!h]
\centering
\caption{Effect of different AD techniques without a certification component on CIFAR-10.}
\vspace{0.25em}
\label{tab:dist_results}
\footnotesize
\begin{minipage}{0.9\linewidth}
\centering
\begin{tabular}{@{}l l@{}}
\toprule
Method & Objective \\
\midrule
Nat-KL & $\mathrm{KL}\!\left(P_{\theta_t}(\mathbf{x}) \,\|\, P_{\theta_s}(\mathbf{x})\right)$ \\
AD$^\dagger$ & $\mathrm{KL}\!\left(P_{\theta_t}(\mathbf{x}) \,\|\, P_{\theta_s}(\mathbf{x_{adv}})\right)$ \\
RSLAD$^\ddagger$ & $(1-\lambda)\mathrm{KL}\!\left(P_{\theta_t}(\mathbf{x}) \,\|\, P_{\theta_s}(\mathbf{x})\right) + \lambda\mathrm{KL}\!\left(P_{\theta_t}(\mathbf{x}) \,\|\, P_{\theta_s}(\mathbf{x_{adv}})\right)$ \\
AF-KD$^\dagger$ & $\mathcal{L}_{\mathrm{CE}}\!\left(-\mathbf{z}_{\theta_s}(\mathbf{x_{adv}},y), y\right) + \beta \left\|h_{\theta_s}(\mathbf{x_{adv}}) - h_{\theta_t}(\mathbf{x})\right\|_2^2$ \\
\bottomrule
\end{tabular}
\vspace{0.25em}
\end{minipage}
\vspace{0.6em}
\begin{tabular}{l c l c c c c}
\toprule
\multicolumn{1}{c}{Dataset} & $\varepsilon_\infty$ & \multicolumn{1}{c}{Method} 
& Std. acc. [\%] & $\Delta$ Std. 
& AA acc. [\%] & $\Delta$ AA \\
\midrule
\multirow{10}{*}{CIFAR-10}
& \multirow{5}{*}{$\frac{2}{255}$}
& PGD Teacher & 88.67 & -- & 72.41 & -- \\
& & Nat-KL    & \textbf{89.89} & \pos{1.22} & 30.72 & \negv{-41.69} \\
& & AD$^\dagger$    & 88.64 & \negv{-0.03} & \textbf{73.62} & \pos{1.21} \\
& & RSLAD$^\ddagger$ & 87.88 & \negv{-0.79} & 73.32 & \pos{0.91} \\
& & AF-KD$^\dagger$   & 89.52 & \pos{0.85} & 73.55 & \pos{1.14} \\
\cmidrule{2-7}
& \multirow{5}{*}{$\frac{8}{255}$}
& PGD Teacher & 78.95 & -- & \textbf{42.48} & -- \\
& & Nat-KL    & \textbf{81.32} & \pos{2.37} & 0.52  & \negv{-41.96} \\
& & AD$^\dagger$    & 77.08 & \negv{-1.87} & 41.64 & \negv{-0.84} \\
& & RSLAD$^\ddagger$ & 76.47 & \negv{-2.48} & 42.38 & \negv{-0.10} \\
& & AF-KD$^\dagger$   & 78.95 & \neutral{0.00} & 37.49 & \negv{-4.99} \\
\bottomrule
\end{tabular}
\begin{minipage}{0.8\linewidth}
\footnotesize
$^\dagger$ IBP terms removed from AD-CERT/CC-DIST, \ie $\alpha=0$. \\
$^\ddagger$ Adversarial examples are generated against soft-labels as in original work \cite{zi2021}.
\end{minipage}
\end{table}
\paragraph{IBP Coefficient Sensitivity Analysis}
In certified training, it is well known that higher certified accuracy often comes at the cost of standard accuracy. Recent methods fall into a family of expressive losses \cite{palma2024}, \eg SABR \cite{muller2023certifiedtrainingsmallboxes}, CC-IBP \& MTL-IBP \cite{palma2024}, and CC-DIST \cite{depalma2026}, including hyperparameters that regulate this trade-off by interpolating between empirical and certifiable training objectives. Our goal in this study is to empirically support our methodology claim (§\ref{sub-sec:lower-bound-surrogate}) that the AD term in AD-CERT provides a sensible teacher-guided surrogate for the lower bound endpoint used in prior expressive losses \cite{muller2023certifiedtrainingsmallboxes, palma2024, depalma2026}.\\ \\
Figure \ref{fig:alpha_sensitivity_tinyimagenet} shows the trade-off between standard, AA, and certified accuracy for AD-CERT on TinyImageNet with $\varepsilon=\frac{1}{255}$ as we decrease the IBP coefficient, $\alpha$, from Equation \ref{eq:ad_cert_loss}. Consistent with the sensitivity analyses of \citet{muller2023certifiedtrainingsmallboxes} (Figure 7), \citet{palma2024} (Figure 1) and \citet{mao2025} (Figure 7), standard and AA accuracy increase as the strength of IBP regularisation is reduced. Up to a point, this also benefits certified accuracy, before the model starts to dramatically lose its certification properties when $\alpha$ is decreased beyond this. This behaviour suggests that the adversarial distillation endpoint in AD-CERT acts analogously to the hard-label adversarial endpoints used in prior expressive losses, while also achieving state-of-the-art certified accuracy (§\ref{sub-sec:main-results}).
\begin{figure}[!h]
    \centering
    \includegraphics[width=0.70\linewidth]{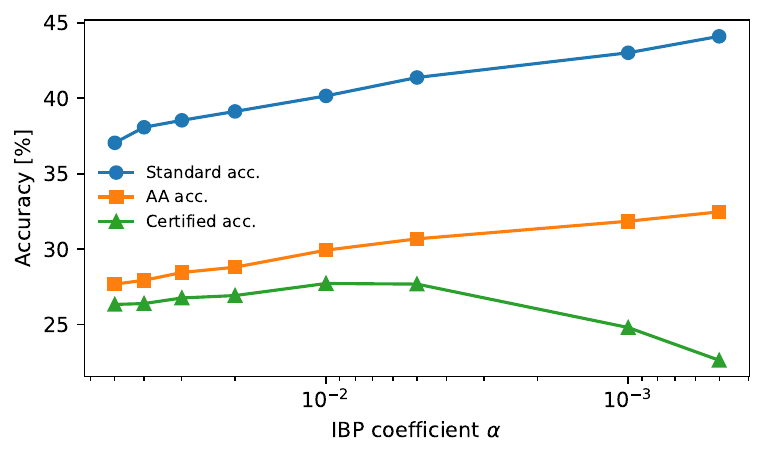}
    \vspace{-0.2cm}
    \caption{
        Sensitivity analysis of AD-CERT with respect to the IBP coefficient, $\alpha$, on TinyImageNet at $\epsilon=1/255$.
        We report standard, AA, and certified accuracy over eight values of $\alpha \in [5\times10^{-4}, 5\times10^{-2}]$.
    }
    \vspace{-0.5cm}
    \label{fig:alpha_sensitivity_tinyimagenet}
\end{figure}

\section{Related Work}
\label{sec:related_work}
\paragraph{Certified Training} As previously mentioned (§\ref{sub-sec:train-rob}), IBP \cite{mirman2018, gowal2018} performs interval arithmetic over the bounds of a simple bounding-box relaxation that over-approximates the possible output ranges of each layer, before computing an upper bound of the worst-case loss based on the output range of the final layer. \citet{shi2021fastcertifiedrobusttraining} later improved IBP-based training by introducing a parameter initialisation technique that induces a constant growth rate of IBP bounds, along with specially designed regularisation to stabilise initial phases of training and improve overall performance. \citet{muller2023certifiedtrainingsmallboxes} propose SABR, which computes the IBP bounds over a small adversarially selected region rather than over the full perturbation set. It defines a smaller box of radius $\tau = \lambda_{\tau}\varepsilon$, with $\lambda_{\tau} \in [0,1]$, centred around a projected adversarial point, ensuring the local region $\mathcal{B}_{\tau}(\mathbf{x}_{\tau}) = \{\mathbf{x}' : \|\mathbf{x}'-\mathbf{x}_{\tau}\|_{\infty} \leq \tau\}$ is fully contained within $\mathcal{B}_{\varepsilon}(\mathbf{x})$. Final IBP bounds are then computed over $\mathcal{B}_{\tau}(\mathbf{x}_{\tau})$ and used to calculate the final IBP loss, \ie $\mathcal{L}_{\mathrm{SABR}} =\mathcal{L}(-\underline{\mathbf{z}}^{\Delta_\tau}_{\theta}(\mathbf{x},y), y)$. The intuition is that propagating a smaller adversarially selected region reduces the approximation error induced by IBP bounds, which grows with respect to network depth \cite{mao2024}, thereby providing an unsound but effective approximation of the robust loss over the full perturbation set. \citet{mao2023} introduced TAPS, which splits a network into two sub-parts before training the first part through IBP and the second through AT. The idea is that the PGD under-approximation can compensate for some of the over-approximation error induced by IBP bounds. STAPS \cite{mao2023} carries out an identical procedure, only changing the perturbation region to a smaller adversarially selected one as in SABR. \citet{palma2024} formalise a family of \textit{expressive losses}, which interpolate between empirical lower bounds and certified upper bounds of the worst-case robust loss. MTL-IBP is one such method that achieved relative state-of-the-art performance through leveraging a task balancing coefficient $\alpha \in [0,1]$ for fine-grained control over a linear interpolation of PGD and IBP based training, \ie $\mathcal{L}_{\mathrm{MTL-IBP}} = (1-\alpha)\mathcal{L}_\mathrm{PGD} + \alpha \mathcal{L}_\mathrm{IBP}$. \citet{depalma2026} introduced CC-DIST, which splits the network into a feature extractor and classification head, using an empirically trained teacher for feature-space distillation before learning final classifications through their prior CC-IBP objective \cite{palma2024} on the full model. CC-DIST is closest to our work, since it distils knowledge from an empirically robust teacher. However, CC-DIST is tightly coupled with the notion of expressivity \cite{palma2024}, combining a robust feature-space distillation loss with the expressive CC-IBP objective. Moreover, CC-DIST calculates both adversarial examples and IBP bounds for distillation. In contrast, AD-CERT uses a simple logit-level AD branch as the empirical endpoint, while keeping the certified branch as the baseline IBP loss.
\\ \\
All of the previously mentioned works train standard feed-forward ReLU networks. A related but distinct line of work instead studies robustness by construction. \textsc{Sortnet} \cite{zhang2022boostingcertifiedrobustnesslinfinity} is a notable such model that utilises $\ell_\infty$-distance functions to provide a 1-Lipschitz architecture. While \textsc{Sortnet} performs particularly well on the CIFAR-10 dataset with $\varepsilon=\frac{8}{255}$, it is less competitive with standard certified training methods on ReLU architectures, particularly against smaller $\varepsilon$ values (see §\ref{sub-sec:main-results}). The non-smoothness and gradient sparsity introduced by 1-Lipschitz architectures leave room for exploration in their practical applicability to certified training but remains outside the scope of this work.
\section{Conclusions}
\label{sec:conclusions}
In this paper, we introduced AD-CERT, a novel certified training algorithm that combines the empirical strengths of an adversarially trained teacher with the verifiability of IBP bounds through a simple linear combination of adversarial distillation over the logit-level and an IBP loss objective. AD-CERT achieves overall state-of-the-art certified accuracy, outperforming all prior methods across five standard certified training benchmarks, while remaining competitive in standard accuracy. Our ablations further suggest that adversarial logit-level distillation provides an effective empirical endpoint for certified training, acting analogously to previous hard-label lower bound endpoints while improving certified performance. Nevertheless, a substantial gap remains between empirically robust teachers and certifiably robust students in terms of standard and adversarial accuracy, particularly on harder settings, highlighting the need for future work on stronger mechanisms for transferring empirical robustness into certifiable robustness.

\bibliographystyle{plainnat}
\bibliography{refs}

\newpage
\appendix
\section{Interval Bound Propagation (IBP)} \label{app:ibp}
IBP has been at the core of \textit{all} recent state-of-the-art certified training methods \cite{shi2021fastcertifiedrobusttraining,palma2022,muller2023certifiedtrainingsmallboxes, mao2023, palma2024, depalma2026}. Here, we formalise the bound propagation for a single layer, as proposed by \citet{gowal2018}. For an affine layer (\eg convolutional or linear layer) represented by $h_k(z_{k-1}) = Wz_{k-1}+b$, we can define the bound calculation as follows:
\begin{align}
  \mu_{k-1} = \frac{\bar{z}_{k-1} + \underline{z}_{k-1}}{2} &, \quad \mu_k = W\mu_{k-1} + b, \\
  r_{k-1} = \frac{\bar{z}_{k-1} - \underline{z}_{k-1}}{2} &, \quad r_k = |W|r_{k-1}, \\
  \underline{z}_k = \mu_k - r_k &, \quad \bar{z}_k = \mu_k + r_k,  
\end{align}
where $\mu$ denotes the centre of the interval box, $r$ its radius, and
$\underline{z}_k,\bar{z}_k$ denote lower and upper bounds of the layer output, respectively. When $h_k$ is an element-wise increasing activation function, \eg ReLU, sigmoid, or tanh, we instead trivially propagate the bounds as $\underline{z}_k = h_k(\underline{z}_{k-1})$ and $\bar{z}_k = h_k(\bar{z}_{k-1})$. For ReLU networks, an unstable ReLU activation implies the ReLU activation may be either active or inactive given the input interval, \ie $\underline{z}_{k-1}< 0 <\bar{z}_{k-1}$. Hence, incomplete verifiers must over-approximate the possible activation outputs. Figure \ref{fig:ibp_relu_comparison}
compares the loose IBP box relaxation with the tighter convex hull relaxation for an unstable ReLU.
\begin{figure}[h]
\centering
\begin{tabular}{cc}
\fbox{%
\parbox{6.0cm}{%
\centering
\begin{tikzpicture}[>=latex, scale=0.75, transform shape]
    \def\u{2.5}
    \def\l{-1.5}
    \def\h{\u}
    \fill[cyan!15] (\l,0) rectangle (\u,\h);
    \draw[thick] (\l,0) -- (\u,0) -- (\u,\h) -- (\l,\h) -- cycle;
    \draw[->] (-2.5, 0) -- (3.5, 0) node[right] {$z$}; 
    \draw[->] (0, -1) -- (0, 3.5) node[above] {$\mathrm{ReLU}(z)$};
    \draw[dashed] (\l, -1) -- (\l, 3.5) node[at start, below] {$\underline{z}_{k-1}$};
    \draw[dashed] (\u, -1) -- (\u, 3.5) node[at start, below] {$\bar{z}_{k-1}$};
    \draw[dashed] (-2.5, \h) -- (3.5, \h)
        node[right] {$\bar{z}_k $};
    \draw[dashed, thick, black] (0,0) -- (\u,\u);
    \draw[dashed, thick, black] (\l,0) -- (0,0);
\end{tikzpicture}
}}
&
\fbox{%
\parbox{6.0cm}{%
\centering
\begin{tikzpicture}[>=latex, scale=0.75, transform shape]
    \def\u{2.5}
    \def\l{-1.5}
    \def\h{\u}
    \fill[cyan!15] (\l,0) -- (0,0) -- (\u,\u) -- cycle;
    \draw[thick] (\l,0) -- (0,0) -- (\u,\u) -- cycle;
    \draw[->] (-2.5, 0) -- (3.5, 0) node[right] {$z$}; 
    \draw[->] (0, -1) -- (0, 3.5) node[above] {$\mathrm{ReLU}(z)$};
    \draw[dashed] (\l, -1) -- (\l, 3.5) node[at start, below] {$\underline{z}_{k-1}$};
    \draw[dashed] (\u, -1) -- (\u, 3.5) node[at start, below] {$\bar{z}_{k-1}$};
    \draw[dashed] (-2.5, \h) -- (3.5, \h)
        node[right] {$\bar{z}_k$};
    \draw[dashed, thick, black] (0,0) -- (\u,\u);
    \draw[dashed, thick, black] (\l,0) -- (0,0);
\end{tikzpicture}
}}
\\
(a) IBP Relaxation & (b) Convex Hull
\end{tabular}
\caption{Comparison of IBP relaxation and the tighter convex hull relaxation. The shaded regions represent convex over-approximations of an unstable ReLU function. The dashed line represents the actual possible activation outputs.}
\label{fig:ibp_relu_comparison}
\end{figure}
\section{Deferred Proofs of Theoretical Results}\label{app:proofs}
In this section, we provide proofs for each of the theoretical results from our methodology section (§\ref{sec:methodology}) of the main text. In each case, we restate the claim for the ease of the reader.
\begin{proposition}[Equivalence of AD and soft-label AT]
Let $\mathbf{x_{adv}}$ be fixed, and let the teacher predictive distribution $P_{\theta_t}(\mathbf{x})$ be fixed. Then the AD objective:
\begin{equation}
    \mathcal{L}_{\mathrm{AD}}
    :=
    \mathrm{KL}\!\left(P_{\theta_t}(\mathbf{x}) \,\|\, P_{\theta_s}(\mathbf{x_{adv}})\right),
\end{equation}
is equal, up to an additive constant independent of the student parameters, to AT with soft teacher labels, namely:
\begin{equation}
    \mathcal{L}_{\mathrm{soft\mbox{-}AT}}
    :=
    \mathcal{L}_\mathrm{CE}\!\left(f_{\theta_s}(\mathbf{x_{adv}}),\, P_{\theta_t}(\mathbf{x})\right).
\end{equation}
Moreover, both objectives induce identical backpropagation updates with respect to the student logits. In particular, for each class index $i$:
\begin{equation}
    \frac{\partial \mathcal{L}_{\mathrm{AD}}}{\partial z_{\theta_s}^i}
    =
    [P_{\theta_s}(\mathbf{x_{adv}})]_i - [P_{\theta_t}(\mathbf{x})]_i
    =
    \frac{\partial \mathcal{L}_{\mathrm{soft\mbox{-}AT}}}{\partial z_{\theta_s}^i}.
\end{equation}
\end{proposition}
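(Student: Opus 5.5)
The plan is to expand the KL divergence directly and then differentiate the softmax with respect to the student logits. Write $q := P_{\theta_t}(\mathbf{x})$ for the fixed teacher distribution. Write $z := f_{\theta_s}(\mathbf{x_{adv}})$ for the student logits, with components $z^i_{\theta_s}$, and $p := P_{\theta_s}(\mathbf{x_{adv}}) = \mathrm{softmax}(z)$. First I will pin down what the soft-label cross-entropy means, since Section~\ref{sub-sec:train-rob} defines $\mathcal{L}_{\mathrm{CE}}$ only for a hard label $y$. The natural extension is $\mathcal{L}_{\mathrm{CE}}(z, q) := -\sum_i q_i \log \mathrm{softmax}(z)_i$. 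This reduces to the hard-label formula when $q$ is one-hot at $y$, because $-\log \mathrm{softmax}(z)_y = \log(1+\sum_{i\neq y} e^{z_i - z_y})$. I will note this consistency check explicitly.

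Next I expand Equation~\eqref{eq:KL} with $\tau=1$:
\[
\mathrm{KL}(q\,\|\,p) = \sum_i q_i\log q_i - \sum_i q_i \log p_i = -H(q) + \mathcal{L}_{\mathrm{soft\mbox{-}AT}} .
\]
Since $q$ depends only on $\theta_t$ and the clean input $\mathbf{x}$, the term $-H(q)$ is independent of $\theta_s$. This gives the additive-constant equivalence. One small caveat concerns zero teacher probabilities: softmax outputs are strictly positive, so $q_i > 0$ and no $0\log 0$ convention is needed.

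For the gradient claim, the constant $-H(q)$ has zero derivative, so it suffices to differentiate $\mathcal{L}_{\mathrm{soft\mbox{-}AT}} = -\sum_j q_j\bigl(z_j - \log\sum_k e^{z_k}\bigr)$. I will rewrite it as $-\sum_j q_j z_j + \bigl(\sum_j q_j\bigr)\log\sum_k e^{z_k}$ and use $\sum_j q_j = 1$. Differentiating with respect to $z_i$ then gives $-q_i + e^{z_i}/\sum_k e^{z_k} = p_i - q_i$, which is the stated expression for both losses. Here $\mathbf{x_{adv}}$ is held fixed, as the hypothesis stipulates, so no dependence of the attack on $\theta_s$ enters. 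Gradients with respect to $\theta_s$ then coincide by the chain rule through the common logits $z$.

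I do not expect a real obstacle. The only delicate points are the definitional extension of $\mathcal{L}_{\mathrm{CE}}$ to soft labels and the use of normalisation $\sum_j q_j = 1$ in the log-sum-exp derivative. Without that normalisation the gradient would read $(\sum_j q_j)p_i - q_i$, so I will state it explicitly.
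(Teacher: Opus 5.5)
Your proposal is correct and follows essentially the paper's route: you split the KL into the teacher's negative entropy (constant in $\theta_s$) plus the soft-label cross-entropy, then differentiate using $\sum_j [P_{\theta_t}(\mathbf{x})]_j = 1$ to obtain $[P_{\theta_s}(\mathbf{x_{adv}})]_i - [P_{\theta_t}(\mathbf{x})]_i$. The only differences are cosmetic. You differentiate via the log-sum-exp form, whereas the paper applies the softmax Jacobian $P^j(\delta_{ij}-P^i)$ term by term, and you explicitly define the soft-label cross-entropy and check that it reduces to the hard-label case, which the paper leaves implicit.
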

\begin{proof}
For brevity, we let $z^i_{\theta_s}:= [f_{\theta_s}(\mathbf{x_{adv}})]_i$ and $P^i_{\theta_s}:= [P_{\theta_s}]_i$ denote the i-th entry of the raw logit outputs and predictive distributions of the student classifier, respectively. First, recall the derivative of the softmax function:
\begin{equation}
    \frac{\partial P^j}{\partial z^i} = 
\begin{cases} 
P^i(1 - P^i), & i=j, \\ 
-P^i P^j, & i \neq j. 
\end{cases} 
\end{equation}
For ease of notation, we can rewrite this in terms of the Kronecker delta function:
\begin{equation}
    \frac{\partial P^j}{\partial z^i} = P^j(\delta_{ij} - P^i), 
\quad \text{where } \delta_{ij} = 
\begin{cases} 
1, & i=j, \\ 
0, & i \neq j .
\end{cases}
\end{equation}
Now, we compare the derivatives of $\mathcal{L}_{\mathrm{AD}}$ and $\mathcal{L}_{\mathrm{soft\mbox{-}AT}}$ with respect to $z^i_{\theta_s}$. Given that $\mathcal{L}_{\mathrm{AD}} = \sum^C_{j=1} P_{\theta_t}^j(\mathbf{x}) \cdot \log \left( \frac{ P_{\theta_t}^j(\mathbf{x})}{ P_{\theta_s}^j(\mathbf{x_{adv}})} \right) = \sum_{j=1}^{C} P_{\theta_t}^j(\mathbf{x}) \cdot \log(P_{\theta_t}^j(\mathbf{x})) - \sum_{j=1}^{C} P_{\theta_t}^j(\mathbf{x}) \cdot \log(P_{\theta_s}^j(\mathbf{x_{adv}}))$, we note that the $\sum_{j=1}^{C} P_{\theta_t}^j(\mathbf{x}) \cdot \log(P_{\theta_t}^j(\mathbf{x}))$ term is independent of student parameters, so:
\begin{align}
     \frac{\partial \mathcal{L}_{\mathrm{AD}}}{\partial z_{\theta_s}^i} &= \frac{\partial}{\partial z_{\theta_s}^i} \left[ - \sum_{j=1}^{C} P_{\theta_t}^j(\mathbf{x}) \log(P_{\theta_s}^j(\mathbf{x_{adv}})) \right],
     \label{eq:partial_ard}\\
    &= - \sum_{j=1}^{C} P_{\theta_t}^j(\mathbf{x})  \frac{1}{P_{\theta_s}^j(\mathbf{x_{adv}})}  \frac{\partial P_{\theta_s}^j(\mathbf{x_{adv}})}{\partial z_s^i}, \\
    &= - \sum_{j=1}^{C} \frac{P_{\theta_t}^j(\mathbf{x})}{P_{\theta_s}^j(\mathbf{x_{adv}})}  P_{\theta_s}^j(\mathbf{x_{adv}})(\delta_{ij} - P_{\theta_s}^i(\mathbf{x_{adv}})), \\
    &= - \sum_{j=1}^{C} P_{\theta_t}^j(\mathbf{x}) (\delta_{ij} - P_{\theta_s}^i(\mathbf{x_{adv}})), \\
    &= - P_{\theta_t}^i(\mathbf{x}) + P_{\theta_s}^i(\mathbf{x_{adv}}) \sum_{j=1}^{C} P_{\theta_t}^j(\mathbf{x}), \\
    &= P_{\theta_s}^i(\mathbf{x_{adv}}) - P_{\theta_t}^i(\mathbf{x}). \label{eq:AD_min}
\end{align}
Finally, recall that $\mathcal{L}_{\mathrm{soft\mbox{-}AT}} = \mathcal{L}_\mathrm{CE}(f_{\theta_s}(\mathbf{x_{adv}}), P_{\theta_t}(\mathbf{x}))$, and,
\begin{equation}
    \mathcal{L}_\mathrm{CE}(f_{\theta_s}(\mathbf{x_{adv}}), P_{\theta_t}(\mathbf{x})) =-\sum^C_{j=1} P_{\theta_t}^j(\mathbf{x}) \log(P_{\theta_s}^j(\mathbf{x_{adv}})).
\end{equation}
Hence, the result falls immediately from Equations \eqref{eq:partial_ard}-\eqref{eq:AD_min}, and by extension, the two objectives must only differ by an additive constant independent of $\theta_s$.
\end{proof}

\begin{proposition}[Analytical form of AD-CERT]
Let $H(P)$ and $H(Q,P)$ denote the entropy and cross entropy respectively for probability distributions $P$ and $Q$. That is:
\begin{equation}
    H(P) =  -\sum_{i=1}^C p_i\log(p_i), \qquad H(Q,P) = -\sum_{i=1}^C q_i\log(p_i).
\end{equation}
Then, $\mathcal{L}_{\mathrm{AD\mbox{-}CERT}}$ from Equation \eqref{eq:ad_cert_loss} can be re-written as:
\begin{equation}
    \mathcal{L}_{\mathrm{AD\mbox{-}CERT}}
    =
    (1-\alpha)H(P_{\theta_t}(\mathbf{x}),P_{\theta_s}(\mathbf{x_{adv}}))
    +
    \alpha \mathcal{L}_{\mathrm{IBP}}
    -
    (1-\alpha)H(P_{\theta_t}(\mathbf{x})).
\end{equation}
Therefore, up to a constant independent of $\theta_s$, AD-CERT is a scalarisation between
(i) a soft-label adversarial lower bound surrogate and
(ii) a certified IBP upper bound.
\end{proposition}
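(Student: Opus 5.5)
The plan is to expand the KL term of Equation \eqref{eq:ad_cert_loss} directly, using the definition in Equation \eqref{eq:KL} with $\tau=1$. Write $p_i := [P_{\theta_t}(\mathbf{x})]_i$ and $q_i := [P_{\theta_s}(\mathbf{x_{adv}})]_i$. Both are strictly positive because they are softmax outputs, so the logarithms are well defined and the split below is legitimate. Splitting the logarithm of the ratio gives
\begin{equation*}
\mathrm{KL}\!\left(P_{\theta_t}(\mathbf{x}) \,\|\, P_{\theta_s}(\mathbf{x_{adv}})\right) = \sum_{i=1}^C p_i\log p_i - \sum_{i=1}^C p_i \log q_i = -H(P_{\theta_t}(\mathbf{x})) + H(P_{\theta_t}(\mathbf{x}),P_{\theta_s}(\mathbf{x_{adv}})).
\end{equation*}
This is the same decomposition already used in the proof of Proposition \ref{prop:ad-cert-soft-label}.

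Next, multiply this identity by $(1-\alpha)$ and add $\alpha\mathcal{L}_{\mathrm{IBP}}$. Here $\mathcal{L}_{\mathrm{IBP}}$ is exactly the second term of Equation \eqref{eq:ad_cert_loss}, namely $\mathcal{L}_\mathrm{CE}(-\underline{\mathbf{z}}^\Delta_{\theta_s}(\mathbf{x},y),y)$. Rearranging the terms gives the stated equality.

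For the interpretive \emph{therefore} clause, I would argue as follows.

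\begin{itemize}
\item \textbf{The entropy term is a constant.} The teacher is frozen and $\mathbf{x}$ is fixed, so $-(1-\alpha)H(P_{\theta_t}(\mathbf{x}))$ does not depend on $\theta_s$.
\item \textbf{The cross-entropy term is soft-label AT.} $H(P_{\theta_t}(\mathbf{x}),P_{\theta_s}(\mathbf{x_{adv}}))$ is exactly $\mathcal{L}_{\mathrm{soft\mbox{-}AT}}$ from Proposition \ref{prop:ad-cert-soft-label}, i.e.\ AT evaluated at a concrete point $\mathbf{x_{adv}} \in \mathcal{B}_\varepsilon(\mathbf{x})$ with soft labels. It is therefore the teacher-guided analogue of $\underline{\mathcal{L}}_\mathrm{rob}$: it plays the role of the lower bound surrogate, but is not claimed to be a formal lower bound.
\item \textbf{The IBP term is a sound upper bound.} By Section \ref{sub-sec:train-rob}, $\mathcal{L}_{\mathrm{IBP}}$ is the sound upper bound $\overline{\mathcal{L}}_\mathrm{rob}$ of Equation \eqref{eq:rob-inequality}.
\item \textbf{The weights form a scalarisation.} With $\alpha\in[0,1]$, the coefficients $(1-\alpha,\alpha)$ are nonnegative and sum to one, so the $\theta_s$-dependent part is a convex scalarisation of the two endpoints.
\end{itemize}

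I expect no genuine obstacle in the algebra. The only care needed is in the wording of the conclusion: calling the soft-label term a lower bound \emph{surrogate} rather than a lower bound, consistent with the paper's framing. I would also note that gradients with respect to $\theta_s$ are unchanged by dropping the constant, which follows from Proposition \ref{prop:ad-cert-soft-label}.
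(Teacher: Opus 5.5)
Your proposal is correct and follows the paper's route: split the logarithm in the KL term to get $H(P_{\theta_t}(\mathbf{x}),P_{\theta_s}(\mathbf{x_{adv}}))-H(P_{\theta_t}(\mathbf{x}))$, then substitute into Equation \eqref{eq:ad_cert_loss}. Your justification of the ``therefore'' clause goes a little beyond what the paper writes out, but it is consistent with the paper: the frozen teacher makes the entropy term constant, the cross-entropy term is $\mathcal{L}_{\mathrm{soft\mbox{-}AT}}$, and the weights $(1-\alpha,\alpha)$ are convex.
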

\begin{proof}
    The proof is immediate since we can write:
    \begin{align}
        \mathcal{L}_{\mathrm{KL}}\!\left(P_{\theta_t}(\mathbf{x}) \,\|\, P_{\theta_s}(\mathbf{x_{adv}})\right) &= \sum^C_{i=1} P_{\theta_t}^i(\mathbf{x}) \cdot \log \left( \frac{ P_{\theta_t}^i(\mathbf{x})}{ P_{\theta_s}^i(\mathbf{x_{adv}})} \right), \\
        &= \sum^C_{i=1} P_{\theta_t}^i(\mathbf{x}) \cdot \big ( \log(P_{\theta_t}^i(\mathbf{x})) - \log(P_{\theta_s}^i(\mathbf{x_{adv}})) \big ), \\
        &= - \sum^C_{i=1} P_{\theta_t}^i(\mathbf{x}) \log(P_{\theta_s}^i(\mathbf{x_{adv}})) + \sum^C_{i=1} P_{\theta_t}^i(\mathbf{x})\log(P_{\theta_t}^i(\mathbf{x})),\\
        &=  H(P_{\theta_t}(\mathbf{x}),P_{\theta_s}(\mathbf{x_{adv}})) - H(P_{\theta_t}(\mathbf{x})).
    \end{align}
    Hence, substituting this identity into Equation \eqref{eq:ad_cert_loss} gives: 
    \begin{equation}
        \mathcal{L}_{\mathrm{AD\mbox{-}CERT}}
        =
        (1-\alpha)H(P_{\theta_t}(\mathbf{x}),P_{\theta_s}(\mathbf{x_{adv}})) +\alpha \mathcal{L}_{\mathrm{IBP}}
        -(1-\alpha)H(P_{\theta_t}(\mathbf{x})),
    \end{equation}
    as required.
\end{proof}
\begin{lemma}
For AD, the optimal logit margin, $\Delta z_{ij}^*$, between any two classes $i,j$ is:
\begin{equation}
\Delta z_{ij}^* = \log\left(\frac{[P_{\theta_t}(\mathbf{x})]_i}{[P_{\theta_t}(\mathbf{x})]_j}\right).
\end{equation}
Moreover, standard AT has no finite minimiser, and its infimum $0$ is approached only as $[f_{\theta_s}(\mathbf{x_{adv}})]_y-[f_{\theta_s}(\mathbf{x_{adv}})]_j\to+\infty, \quad \forall j\neq y$.
\end{lemma}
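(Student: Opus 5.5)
We treat the student logits $z^i := [f_{\theta_s}(\mathbf{x_{adv}})]_i$ as free variables in $\mathbb{R}^C$, with the teacher distribution $q := P_{\theta_t}(\mathbf{x})$ held fixed. Both objectives then become functions of the logit vector, and we analyse their stationary points directly.

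For the AD part, we use Proposition \ref{prop:ad-cert-soft-label}, which gives
\begin{equation}
\frac{\partial \mathcal{L}_{\mathrm{AD}}}{\partial z^i} = [P_{\theta_s}(\mathbf{x_{adv}})]_i - q_i .
\end{equation}
The steps are as follows.
\begin{enumerate}
\item $\mathcal{L}_{\mathrm{AD}}$ is convex in $z$. It equals the cross-entropy $-\sum_j q_j z^j + \log\sum_k e^{z^k}$ up to a constant, i.e.\ log-sum-exp plus a linear term.
\item Hence any stationary point is a global minimiser.
\item Stationarity forces $\mathrm{softmax}(z)=q$.
\item If every $q_i>0$, which holds for a softmax teacher, this stationary point exists. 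It is unique up to the shift invariance $z\mapsto z+c\mathbf{1}$, and the minimum value is $\mathrm{KL}(q\|q)=0$.
\item Taking ratios, $e^{z^i}/e^{z^j}=q_i/q_j$. The common normaliser cancels, so the margin $\Delta z_{ij}^* = z^i-z^j = \log(q_i/q_j)$ is finite and well defined despite the shift ambiguity.
\end{enumerate}

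For standard AT, write
\begin{equation}
\mathcal{L}_{\mathrm{AT}} = \log\Bigl(1+\sum_{j\ne y} e^{-m_j}\Bigr), \qquad m_j := z^y - z^j .
\end{equation}
Each summand $e^{-m_j}$ is strictly positive for finite $m_j$, so $\mathcal{L}_{\mathrm{AT}}>0$ at every finite logit vector. Moreover $\mathcal{L}_{\mathrm{AT}}\to 0$ along $m_j\to+\infty$ for all $j\neq y$, so the infimum is $0$ and no finite minimiser exists. Equivalently, the gradient $P_{\theta_s}^i - \delta_{iy}$ never vanishes, because the softmax has full support. This is the same computation as Proposition \ref{prop:ad-cert-soft-label} with the one-hot label $q=e_y$.

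It remains to show the word ``only'': the infimum is approached \emph{only} when all margins diverge. Suppose a sequence has loss tending to $0$. Since $\mathcal{L}_{\mathrm{AT}} \ge \log(1+e^{-m_j})$ for each fixed $j\neq y$, each term $e^{-m_j}$ must tend to $0$. That forces $m_j\to+\infty$ for every $j\neq y$.

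This last step is the only delicate point. I expect the rest to be routine; the main care is stating the shift invariance clearly and assuming the teacher has full support so that the logarithm is finite.
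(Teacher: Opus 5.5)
Your proof is correct and follows the same route as the paper: the AD gradient $[P_{\theta_s}(\mathbf{x_{adv}})]_i - [P_{\theta_t}(\mathbf{x})]_i$ vanishes exactly when $\mathrm{softmax}(z)=P_{\theta_t}(\mathbf{x})$, which gives logits $\log [P_{\theta_t}(\mathbf{x})]_i + c$ and hence the log-ratio margin, while $\mathcal{L}_{\mathrm{AT}}=\log\bigl(1+\sum_{j\ne y}e^{-m_j}\bigr)>0$ together with a limit argument gives the AT claims. Two of your steps tighten the paper's argument: convexity (log-sum-exp plus a linear term) justifies that the stationary point is a global minimiser, which the paper reads directly off the gradient, and your per-term bound $\mathcal{L}_{\mathrm{AT}}\ge\log(1+e^{-m_j})$ proves the ``only'' direction, which the paper leaves unproved, since it shows only sufficiency via $\mathcal{L}_{\mathrm{AT}}\le\log\bigl(1+(C-1)e^{-m}\bigr)$ with $m=\min_{j\ne y}m_j$.
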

\begin{proof}
First note that, from Equation \eqref{eq:AD_min}, $\mathcal{L}_{\mathrm{AD}}$ is minimised iff $P_{\theta_s}=P_{\theta_t}$,
or equivalently, $P_{\theta_s}^i(\mathbf{x_{adv}})=P_{\theta_t}^i(\mathbf{x})$ $\forall$ $i\in\{1,\dots,C\}.$
Rewriting this using the definition of softmax, we have:
\begin{equation}
     \frac{e^{z_s^i}}{\sum_{j=1}^C e^{z_s^j}} = P_{\theta_t}^i(\mathbf{x}) \quad \forall \ i\in\{1,\dots,C\},
\end{equation}
where $z^i_{\theta_s} = [f_{\theta_s}(\mathbf{x_{adv}})]_i$, which implies:
\begin{equation}
    z_s^i = \log \left(P_{\theta_t}^i(\mathbf{x}) \cdot \sum_{j=1}^C e^{z_s^j} \right) = \log \left(P_{\theta_t}^i(\mathbf{x}) \right) + \log \left(\sum_{j=1}^C e^{z_s^j} \right)
\end{equation}
Then, we can write the optimal logit for a given class $i$ as $z_s^{i*}:=\log(P_{\theta_t}^i(\mathbf{x}))+c$ for some shared constant, $c\in\mathbb{R}$.
Since softmax is invariant to adding a constant to each logit, the optimal logit margin between any two classes $i$ and $j$ is:
\begin{equation}
   \Delta z_{ij}^* := z_s^{i*}-z_s^{j*} = \log(P_{\theta_t}^i(\mathbf{x}))-\log(P_{\theta_t}^j(\mathbf{x})) = \log\left(\frac{P_{\theta_t}^i(\mathbf{x})}{P_{\theta_t}^j(\mathbf{x})}\right),
\end{equation}
which is finite since $P_{\theta_t}^i(\mathbf{x})>0$ $\forall$ $i$. \\
In contrast, standard AT loss is defined by $\mathcal{L}_{AT} = -\log(P_{\theta_s}^y(\mathbf{x_{adv}}))$, with $y$ denoting the index of the true class label here. Hence, we have:
\begin{equation}
    \mathcal{L}_{AT} = -\log\left(\frac{e^{z_s^y}}{\sum_{j=1}^C e^{z_s^j}}\right) = \log\left(\sum_{j=1}^C e^{z_s^j-z_s^y}\right) = \log\left(1+\sum_{j\neq y} e^{z_s^j-z_s^y}\right).
\end{equation}
For any finite logits, we have $e^{z_s^j-z_s^y}>0$, and hence
\begin{equation}
    1+\sum_{j\neq y} e^{z_s^j-z_s^y} > 1 \implies \mathcal{L}_{AT} > 0.
\end{equation}
Therefore, $\mathcal{L}_{AT}$ has no finite minimiser. Since we have established that $\mathcal{L}_{AT} > 0$ for all finite logits, its global minimum is never attained. However, we can show that its infimum is exactly 0. Let $m = \displaystyle \min_{j \neq y} (z_s^y - z_s^j)$ denote the minimum logit margin of the correct class over all incorrect classes. We can bound the loss as follows:
\begin{equation}
    0 < \mathcal{L}_{AT} = \log\left(1+\sum_{j\neq y} e^{-(z_s^y-z_s^j)}\right) \leq \log\left(1 + (C-1)e^{-m}\right),
\end{equation}
where $C$ is the total number of classes. Taking the limit as the minimum margin grows to infinity ($m \to +\infty$), we have:
\begin{equation}
    \lim_{m \to +\infty} \log\left(1 + (C-1)e^{-m}\right) = \log(1 + 0) = 0.
\end{equation}
Thus, $\inf \mathcal{L}_{AT} = 0$. This infimum is not reachable for any finite network weights, but is approached asymptotically as the correct-class logit margins tend to positive infinity.
\end{proof}

\section{Additional Results}\label{app:additional_results}
In this section, we present additional theoretical results complementary to those in the main text.
\begin{proposition}[Teacher-margin transfer in logit space]
Assume the teacher $f_{\theta_t}$ predicts the true class $y$ on the natural input $\mathbf{x}$. Then, if:
\begin{equation}
   \left\|f_{\theta_s}(\mathbf{x_{adv}}) - f_{\theta_t}(\mathbf{x})\right\|_\infty < \frac{1}{2} \min_{j\neq y} [\mathbf{z}_{\theta_t}(\mathbf{x},y)]_j, 
\end{equation}
it follows that $\displaystyle \argmax_i [f_{\theta_s}(\mathbf{x_{adv}})]_i = y$.
\end{proposition}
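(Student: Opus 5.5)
The plan is a direct perturbation argument in logit space. We write $\mathbf{u} := f_{\theta_s}(\mathbf{x_{adv}})$ and $\mathbf{v} := f_{\theta_t}(\mathbf{x})$. Let $\delta := \|\mathbf{u}-\mathbf{v}\|_\infty$ and $m := \min_{j\neq y}[\mathbf{z}_{\theta_t}(\mathbf{x},y)]_j = \min_{j\ne y}(v_y - v_j)$. Here $\mathbf{z}_{\theta_t}$ is the logit-difference vector $\mathbf{z}^\Delta_{\theta_t}$ from the Background section. The assumption that the teacher predicts $y$ gives $m \ge 0$. The hypothesis $\delta < m/2$ then forces $m>0$ strictly, because $\delta \ge 0$. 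The goal is to show $u_y > u_j$ for every $j \neq y$. That gives strict argmax uniqueness, so $y$ is the unique maximiser and no tie-breaking convention is needed.

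The key step is a two-sided coordinatewise bound. From the $\ell_\infty$ bound we have $u_y \ge v_y - \delta$ and $u_j \le v_j + \delta$ for each $j\neq y$. Subtracting these gives
\begin{equation*}
u_y - u_j \;\ge\; (v_y - v_j) - 2\delta \;\ge\; m - 2\delta \;>\; 0,
\end{equation*}
where the last inequality is exactly the hypothesis $\delta < m/2$. Since this holds for every $j\ne y$, the true-class logit of the student on $\mathbf{x_{adv}}$ strictly dominates all other logits. This is the conclusion $\argmax_i [f_{\theta_s}(\mathbf{x_{adv}})]_i = y$.

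There is no real technical obstacle. The main points to get right are the factor $\tfrac12$ and the notation. The factor is tight because each of the two coordinates $y$ and $j$ can move by up to $\delta$ in opposite directions, so the margin can shrink by $2\delta$. On notation, I would state explicitly that $\mathbf{z}_{\theta_t}(\mathbf{x},y)$ is the unbounded (non-IBP) logit difference $([f_{\theta_t}(\mathbf{x})]_y - [f_{\theta_t}(\mathbf{x})]_j)_{j\ne y}$. Its minimum is therefore the teacher's clean margin, which is nonnegative by the correct-prediction assumption.

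Finally, I would remark on how this links to the AD branch. Lemma~\ref{lemma:logit-margin} shows that minimising $\mathcal{L}_{\mathrm{AD}}$ drives the student's logits on $\mathbf{x_{adv}}$ towards the teacher's clean logits up to a shared additive constant $c$. The proposition as stated uses the raw $\ell_\infty$ distance, so a nonzero shift $c$ could violate its hypothesis even though the margins match. A shift-invariant version, using $\min_c\|\mathbf{u}-\mathbf{v}-c\mathbf{1}\|_\infty$, follows from the same argument, because a common shift cancels in $u_y - u_j$. I would mention this as a corollary rather than change the statement.
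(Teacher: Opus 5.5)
Your proof is correct and follows the paper's argument essentially line for line: you bound each coordinate by $\delta$ using the $\ell_\infty$ hypothesis, subtract to get $[\mathbf{z}_{\theta_s}(\mathbf{x_{adv}},y)]_j \ge [\mathbf{z}_{\theta_t}(\mathbf{x},y)]_j - 2\delta > 0$ for every $j \neq y$, and conclude that the argmax is $y$. Your extra remarks are also correct but not needed for the result: the hypothesis alone forces a strictly positive teacher margin, which makes the correct-prediction assumption redundant, and the shift-invariant corollary holds because a common shift cancels in logit differences.
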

\begin{proof}
    First, define $\delta :=\left\|f_{\theta_s}(\mathbf{x}_{\mathrm{adv}}) - f_{\theta_t}(\mathbf{x})\right\|_\infty$. Then, $-\delta \le [f_{\theta_s}(\mathbf{x_{adv}})]_i - [f_{\theta_t}(\mathbf{x})]_i \le \delta,$ for any class index $i$. Which means, $[f_{\theta_s}(\mathbf{x_{adv}})]_y  \ge [f_{\theta_t}(\mathbf{x})]_y - \delta$ and $
        [f_{\theta_s}(\mathbf{x_{adv}})]_i  \le [f_{\theta_t}(\mathbf{x})]_i + \delta$ for any $i \ne y$. Moreover:
    \begin{align}
        \implies [f_{\theta_s}(\mathbf{x_{adv}})]_y - [f_{\theta_s}(\mathbf{x_{adv}})]_i &\ge [f_{\theta_t}(\mathbf{x})]_y - [f_{\theta_t}(\mathbf{x})]_i - 2\delta, \\
        \implies [\mathbf{z}_{\theta_s}(\mathbf{x_{adv}},y)]_i &\ge [\mathbf{z}_{\theta_t}(\mathbf{x},y)]_i - 2\delta.
    \end{align}
    Hence, if $\delta < \frac{1}{2} \min_{i\neq y} [\mathbf{z}_{\theta_t}(\mathbf{x},y)]_i$, then clearly $[\mathbf{z}_{\theta_t}(\mathbf{x},y)]_i - 2\delta > 0$, which implies \\$[\mathbf{z}_{\theta_s}(\mathbf{x_{adv}},y)]_i > 0$ and $\displaystyle \argmax_i [f_{\theta_s}(\mathbf{x}_{\mathrm{adv}})]_i = y$.
\end{proof}

\begin{lemma}
Let $\mathcal{L}_{AT}$ be bounded by some $U \in \mathbb{R}$ and $U > 0$, \ie
\begin{equation}
    \mathcal{L}_{AT} = \mathcal{L}_\mathrm{CE}(f_{\theta_s}(\mathbf{x_{adv}}), y) = -\log(P_{\theta_s}^y) \le U,
\end{equation}
where $y \in \{1, \dots, C\}$ is the true label index and $P_{\theta_s}^y = \left[\mathrm{softmax}(f_{\theta_s}(\mathbf{x_{adv}}))\right]_y$. Then, for every $j \neq y$, the corresponding true-vs-false logit margin satisfies
\begin{equation}
   z_s^y - z_s^j \ge -\log(e^U - 1). 
\end{equation}
\end{lemma}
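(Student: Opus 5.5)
We rewrite $\mathcal{L}_{AT}$ in its logit-difference form, exactly as in the proof of Lemma \ref{lemma:logit-margin}:
\begin{equation*}
    \mathcal{L}_{AT} = \log\Big(1+\sum_{k\neq y} e^{z_s^k-z_s^y}\Big).
\end{equation*}
Exponentiating the hypothesis $\mathcal{L}_{AT}\le U$ gives
\begin{equation*}
    1+\sum_{k\neq y} e^{z_s^k-z_s^y} \le e^{U},
    \qquad\text{that is,}\qquad
    \sum_{k\neq y} e^{z_s^k-z_s^y}\le e^{U}-1 .
\end{equation*}
Since $U>0$, we have $e^U-1>0$, so $\log(e^U-1)$ is well defined. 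This is the only place the assumption $U>0$ is used. It is also necessary: for finite logits $\mathcal{L}_{AT}>0$ by Lemma \ref{lemma:logit-margin}, so $U>0$ is forced anyway.

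Next, fix any $j\neq y$. Every term of the sum is strictly positive, so dropping all terms except the $j$-th gives
\begin{equation*}
    e^{z_s^j-z_s^y}\le \sum_{k\neq y} e^{z_s^k-z_s^y}\le e^U-1 .
\end{equation*}
Because $\log$ is monotone increasing, taking logarithms yields $z_s^j-z_s^y\le \log(e^U-1)$. Negating both sides gives the claimed inequality $z_s^y-z_s^j\ge -\log(e^U-1)$.

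There is no real obstacle here. The only points needing care are the positivity of $e^U-1$, which keeps the logarithm well defined, and stating that monotonicity of $\exp$ and $\log$ justifies each step. It is also worth remarking on the bound itself. When $U<\log 2$, the quantity $-\log(e^U-1)$ is positive, so every margin is positive and the student classifies $\mathbf{x_{adv}}$ correctly. This is the hard-label analogue of the margin-transfer proposition above.
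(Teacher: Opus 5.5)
Your proof is correct and follows the paper's argument step for step: rewrite $\mathcal{L}_{AT}$ as $\log(1+\sum_{k\neq y} e^{z_s^k-z_s^y})$, exponentiate, isolate the sum, keep only the $j$-th (positive) term, then take logarithms and negate. Your extra remarks are accurate additions the paper does not make: that $U>0$ is what makes $\log(e^U-1)$ well defined, and that $U<\log 2$ forces every margin to be positive.
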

\begin{proof}
Notice that AT loss can be written as:
\begin{equation}
    \mathcal{L}_{AT}
    =
    -\log\left(\frac{e^{z_s^y}}{\sum_{k=1}^C e^{z_s^k}}\right)
    =
    \log\left(1+\sum_{j\neq y} e^{z_s^j-z_s^y}\right).
\end{equation}
Now assume that $\mathcal{L}_{AT} \le U$. Then,
\begin{align}
    \log\left(1+\sum_{j\neq y} e^{z_s^j-z_s^y}\right) &\le U, \\
    \implies 1+\sum_{j\neq y} e^{z_s^j-z_s^y} &\le e^U, \\
    \implies \sum_{j\neq y} e^{z_s^j-z_s^y} &\le e^U-1.
\end{align}
Since each term in the sum is non-negative, it follows that for every $j\neq y$, we have:
\begin{align}
    e^{z_s^j-z_s^y} &\le e^U-1, \\
    \implies z_s^j-z_s^y &\le \log(e^U-1), \\
    \implies z_s^y-z_s^j &\ge -\log(e^U-1),
    \qquad \forall j\neq y,
\end{align}
as required. 
\end{proof}

\section{Additional Experiments}
\label{app:additional-experiments}
In this section, we evaluate the sensitivity of AD-CERT and CC-DIST \cite{depalma2026} to teachers trained under different adversarial training methods and schedules.
\\ \\
The sensitivity results from Table \ref{tab:adcert_ccdist_teacher_sensitivity} suggest that the choice of teacher can affect the standard and adversarial accuracy of the resulting students, but the certified accuracy is comparatively stable across teacher variants, particularly for AD-CERT. Here, short-cycle refers to the training procedure described in \citet{depalma2026}, where we use the SGD optimiser with momentum set to 0.9, a 30-epoch training cycle with a cyclic learning rate, which linearly increases from 0 to 0.2 during the first half of the training, and then decreases back to 0. Long-cycle corresponds to the standard AT settings used in the CTBench library \cite{mao2025}, where PGD \cite{madry2018deeplearningmodelsresistant} and EDAC \cite{zhang2025} models are trained for 240 epochs using the Adam optimiser. Short-cycle teachers tend to slightly improve the certified results of AD-CERT and CC-DIST, which coincides with the observations of \citet{depalma2026} in their work. In general, CC-DIST appears to benefit more noticeably from short-cycle teachers, with AD-CERT being slightly less sensitive. Nevertheless, AD-CERT remains consistently stronger in certified and AA accuracy across all teacher protocols, while CC-DIST tends to achieve slightly higher standard accuracy in most settings, consistent with previous observations in §\ref{sub-sec:main-results}.
\begin{table}[h]
\centering
\caption{Sensitivity analysis of AD-CERT and CC-DIST on CIFAR-10 at $\varepsilon=\frac{8}{255}$ with different teacher training protocols.}
\label{tab:adcert_ccdist_teacher_sensitivity}
\footnotesize
\begin{tabular}{l c c l c c c}
\toprule
\multicolumn{1}{c}{Teacher} 
& Teach. Std. [\%] 
& Teach. AA [\%] 
& \multicolumn{1}{c}{Method} 
& Std. [\%] 
& AA [\%] 
& Cert. [\%] \\ 
\cmidrule{1-7}
\multirow{2}{*}{PGD-Long}   
& \multirow{2}{*}{78.71} 
& \multirow{2}{*}{35.93} 
& AD-CERT & \textbf{53.17} & \textbf{35.95} & \textbf{35.21} \\
& & & CC-DIST & 52.35 & 34.80 & 33.75 \\
\cmidrule{1-7}
\multirow{2}{*}{PGD-Short}  
& \multirow{2}{*}{76.14} 
& \multirow{2}{*}{42.55} 
& AD-CERT & 53.90 & \textbf{36.52} & \textbf{35.61} \\
& & & CC-DIST & \textbf{54.44} & 36.05 & 34.81 \\
\cmidrule{1-7}
\multirow{2}{*}{EDAC-Long}  
& \multirow{2}{*}{78.95} 
& \multirow{2}{*}{42.48} 
& AD-CERT & 53.49 & \textbf{36.46} & \textbf{35.60} \\
& & & CC-DIST & \textbf{54.32} & 35.69 & 34.55 \\
\cmidrule{1-7}
\multirow{2}{*}{EDAC-Short} 
& \multirow{2}{*}{73.55} 
& \multirow{2}{*}{41.99} 
& AD-CERT & 53.69 & \textbf{36.69} & \textbf{35.84} \\
& & & CC-DIST & \textbf{54.33} & 35.76 & 34.68 \\
\bottomrule
\end{tabular}
\end{table}

\section{\scshape Pseudo-code}\label{app:pseudo}
In this section, we outline the pseudocode for the training procedure of AD-CERT (§\ref{sub-sec:ad-cert}) in Algorithm \ref{alg:ad_cert_train}.

\begin{algorithm}[h!]
\caption{AD-CERT Training Procedure}\label{alg:ad_cert_train}
\begin{algorithmic}[1]

\Require PGD teacher $f_{\theta_t}$, data $\mathcal{D}$, perturbation radius ${\epsilon}_{train}$, epochs $N_{warm}, N$, IBP coefficient $\alpha$
\Statex
\State \textbf{Initialisation}
\State Initialise student $f_{\theta_s}$ with parameters ${\theta_s}$ \Comment{\citet{shi2021fastcertifiedrobusttraining} initialisation}
\State Detach teacher $f_{\theta_t}$ gradients from training
\Statex
\State \textbf{Training}
\For{$epoch = 1$ \textbf{to} $N$}
    \If{$epoch \le N_{warm}$}
        \State Update $\epsilon_{curr}$ and $\alpha_{curr}$ \Comment{ramp-up to $\epsilon_{train}, \alpha$} 
    \EndIf
    \For{$(\mathbf{x}, y)$ in $\mathcal{D}$}
        \State Compute $\mathbf{x_{adv}}$ via PGD attack
        \State Compute IBP bounds $\underline{\mathbf{z}}^\Delta_{{\theta_s}}(\mathbf{x}, y)$ using $\epsilon_{curr}$
        
        \State $\mathcal{L}_{dist} \gets \mathrm{KL}\left( \mathrm{softmax}(f_{\theta_t}(\mathbf{x})) \,\|\, \mathrm{softmax}(f_{\theta_s}(\mathbf{x_{adv}})) \right)$
        \State $\mathcal{L}_{robust} \gets \mathcal{L}_\mathrm{CE}(-\underline{\mathbf{z}}^\Delta_{{\theta_s}}(\mathbf{x}, y), y)$
        
        \State $\mathcal{L}_{total} \gets (1-\alpha_{curr}) \mathcal{L}_{dist} + \alpha_{curr} \mathcal{L}_{robust}$
        
        \State Update ${\theta_s} \gets {\theta_s} - \eta \nabla_{\theta_s} \mathcal{L}_{total}$
    \EndFor
\EndFor
\State \Return $f_{\theta_s}$

\end{algorithmic}
\end{algorithm}

\section{\scshape Experimental Setup}\label{app:exp-details}
\subsection{Dataset}
We conduct experiments on MNIST \cite{lecun2010mnist}, CIFAR-10 \cite{krizhevsky2009}, and TinyImageNet \cite{le2015tiny}. These datasets are open-source and freely available, with unspecified licenses. We follow the data preprocessing directly available in the CTBench library \cite{mao2025}. No preprocessing is applied to MNIST. CIFAR-10 and TinyImageNet are normalised using the dataset mean and standard deviation, and random horizontal flips are applied. For CIFAR-10, additional random cropping to $32 \times 32$ is applied after zero-padding each image by 2 pixels on all sides. For TinyImageNet, random cropping to $64 \times 64$ is applied after zero-padding each image by 4 pixels on all sides. We train on the corresponding training sets and certify on the validation sets, following common practice in the certified training literature \cite{shi2021fastcertifiedrobusttraining, muller2023certifiedtrainingsmallboxes, mao2023, palma2024, mao2025, depalma2026}.
\subsection{Model Architectures}
We use the standard CNN7 architecture, a convolutional network consisting of seven convolutional and linear layers. Each layer, except for the final linear layer, is followed by Batch Normalisation and a ReLU activation. This architecture has been shown to perform consistently well across settings \cite{shi2021fastcertifiedrobusttraining, mao2025}, and is therefore widely adopted in the certified training literature \cite{shi2021fastcertifiedrobusttraining, muller2023certifiedtrainingsmallboxes, mao2023, palma2024, mao2025, depalma2026}. For TinyImageNet, we double the stride of the final convolutional layer to reduce computational cost.
\subsection{Training Details}
\label{sub-sec:train_det}
\paragraph{Implementation}
We implement AD-CERT in CTBench using PyTorch. The training loss follows the CTBench decomposition into a natural loss, a robust loss, and regularisation terms. For AD-CERT, the robust loss is given by Equation \ref{eq:ad_cert_loss}, where the AD term uses a fixed adversarially trained teacher, and the certified term is computed with IBP bounds. Unless stated otherwise, the teacher and student use the same CNN7 architecture.
\paragraph{Initialisation}
Adversarial teacher models are initialised by Kaiming uniform \cite{he2015}, while certified student models are initialised using the IBP initialisation from \citet{shi2021fastcertifiedrobusttraining}.
\paragraph{Training Schedule}
We follow the standard CTBench training schedule. Certified models are first trained for one epoch with $\varepsilon = 0$, and then use a warm-up phase where $\varepsilon$ is smoothly increased from $0$ to the target value. The warmup phase is 20 epochs for MNIST with $\varepsilon = 0.1$ and $\varepsilon = 0.3$, 80 epochs for CIFAR-10 with $\varepsilon = \frac{2}{255}$, 120 epochs for CIFAR-10 with $\varepsilon = \frac{8}{255}$, and 80 epochs for TinyImageNet with $\varepsilon = \frac{1}{255}$. We use the IBP regularisation proposed by \citet{shi2021fastcertifiedrobusttraining}, with weight $0.5$ on MNIST and CIFAR-10, and $0.2$ on TinyImageNet, during warmup. In total, we train for 70 epochs on MNIST, 160 epochs on CIFAR-10 with $\varepsilon = \frac{2}{255}$, 240 epochs on CIFAR-10 with $\varepsilon = \frac{8}{255}$, and 160 epochs on TinyImageNet.
\paragraph{Optimisation}
In general, we follow the exact optimisation setup from CTBench. We use Adam \cite{kingma2015adam} with a learning rate of $5 \times 10^{-4}$. The learning rate is decayed by a factor of $0.2$ at epochs 50 and 60 for MNIST, epochs 120 and 140 for CIFAR-10 with $\varepsilon = \frac{2}{255}$, epochs 200 and 220 for CIFAR-10 with $\varepsilon = \frac{8}{255}$, and epochs 120 and 140 for TinyImageNet. We use a batch size of 256 for MNIST and 128 for CIFAR-10 and TinyImageNet. Gradients are clipped to $10$ in $\ell_2$ norm before every optimiser step. No weight decay is applied, and $L_1$ regularisation is applied only to the weights of linear and convolutional layers. Batch normalisation follows the CTBench population-statistics setting. We apply Stochastic Weight Averaging (SWA) using the settings in accordance with MTL-IBP's training setup in CTBench since both methods use an unsound robust objective, making direct model selection on the robust loss impractical.
\paragraph{Teacher Models}
We use the pre-trained PGD checkpoints provided by CTBench as teacher models, except for CIFAR-10 with $\varepsilon = \frac{8}{255}$, where our ablation teacher sensitivity analysis from Table \ref{tab:adcert_ccdist_teacher_sensitivity} showed that a 30-epoch short-cycle PGD model trained with a cyclic learning-rate schedule exactly as described by \citet{depalma2026} alongside an EDAC \cite{zhang2025} step size of $0.3$, gives marginally better performance. All teachers are kept fixed during AD-CERT training.
\paragraph{Tuning of Hyperparameters}
In general, we perform minimal hyperparameter tuning due to the computational overhead of running complete verification. We ran initial experiments following the exact hyperparameters and settings used in CTBench \cite{mao2025} for the MTL-IBP \cite{palma2024} method. For the tuning of $\alpha$, we use a manually selected search range guided by some reported quantities in the CTBench library, namely, unstable ReLU ratio, IBP certification rate, and standard, adversarial, and certified accuracy. These metrics help indicate whether the objective places too much or too little weight on the IBP branch, allowing us to adjust $\alpha$ accordingly. We find that slightly reducing $\alpha$ is beneficial on MNIST at $\varepsilon=0.1$ and on TinyImageNet at $\varepsilon=\frac{1}{255}$. For TinyImageNet, we also observe improvements from increasing $w_{\mathrm{rob}}$ and enlarging the PGD attack region used in the empirical branch. The final hyperparameters used for AD-CERT across all settings are reported in Table \ref{tab:adcert_hyperparameters}.
\begin{table}[h]
\centering
\caption{Best hyperparameters for AD-CERT across dataset settings.}
\label{tab:adcert_hyperparameters}
\footnotesize
\begin{tabular}{l c c c c c}
\toprule
& \multicolumn{2}{c}{MNIST} 
& \multicolumn{2}{c}{CIFAR-10} 
& TinyImageNet \\
\cmidrule(lr){2-3}
\cmidrule(lr){4-5}
\cmidrule(lr){6-6}
Hyperparameter 
& $0.1$ 
& $0.3$ 
& $\frac{2}{255}$ 
& $\frac{8}{255}$ 
& $\frac{1}{255}$ \\
\midrule
$L_1$ regularisation       & $1\times10^{-6}$ & $1\times10^{-6}$ & $3\times10^{-6}$ & 0.0 & $5\times10^{-5}$ \\
$w_{\mathrm{rob}}$         & 0.7 & 1.0 & 1.0 & 1.0 & 0.9 \\
IBP coefficient ($\alpha$) & $7.5\times10^{-3}$ & 0.5 & 0.01 & 0.5 & $5\times10^{-3}$ \\
Train $\varepsilon$        & 0.2 & 0.3 & $\frac{1}{255}$ & $\frac{8}{255}$ & $\frac{1}{255}$ \\
PGD steps                  & 10 & 1 & 8 & 1 & 1 \\
Attack range scale         & 1.0 & 1.0 & 2.0 & 1.0 & 2.0 \\
\bottomrule
\end{tabular}
\end{table}
\subsection{Certification Details}
\label{sub-sec:cert-details}
Certification is performed within CTBench \cite{mao2025} using its wrapper for the $\alpha,\beta$-CROWN verification library. For each example, we first check whether the model predicts the correct clean label. Incorrectly classified examples are marked as uncertified. For correctly classified examples, we run \textsc{AutoAttack} \cite{croce2020} and if an adversarial counterexample is not found, we attempt certification using a sequence of progressively stronger verifiers. Namely, IBP \cite{gowal2018}, CROWN \cite{zhang2018}, $\alpha$-CROWN \cite{xu2021fast}, and finally, the complete $\alpha,\beta$-CROWN branch-and-bound verifier \cite{wang2021beta}. Each stage is only invoked when the previous stage fails to certify the example. We use a timeout of 400 seconds for the $\alpha,\beta$-CROWN branch-and-bound stage and an overall timeout of 1000 seconds per example.

\subsection{Computational Setup \& Cost}
\label{app:complex}
Table \ref{tab:adcert_runtime} reports the training and certification runtime of AD-CERT under the training and certification setups described in Sections \ref{sub-sec:train_det} and \ref{sub-sec:cert-details}, respectively. All experiments were run on an internal cluster with access to NVIDIA V100, A100, A100 MIG slices, and H100 GPUs. Training jobs were run on either 2g.20GB or 3g.40GB A100 MIG slices, using 8 CPU cores. For certification, we used MIG slices in most cases, but occasionally used full A100 or H100 GPUs for harder settings, such as CIFAR-10 with $\varepsilon = \frac{2}{255}$ and TinyImageNet. For clarity, Table \ref{tab:adcert_runtime} reports runtimes for experiments run on a 3g.40GB A100 MIG slice. Training experiments used 64GB of RAM, while certification experiments used 128GB. The training times range from $\sim$ 1 hour on the simplest MNIST setting, to $\sim$ 1 day on TinyImageNet. Certification times range from $\sim$ 12 minutes to over 3 days. \\ \\
Additionally, we report the training cost complexity associated with an AD-CERT student, assuming a pre-trained teacher, compared to previous adversarial and certified training methods \cite{madry2018deeplearningmodelsresistant, zhang2025, gowal2018, muller2023certifiedtrainingsmallboxes, mao2023, palma2024, depalma2026} in Table \ref{tab:training_costs}. On a high level, AD-CERT corresponds to performing IBP and PGD, with the small added cost of a single forward pass of a teacher model that is detached from training. Note that Table \ref{tab:training_costs} is a direct extrapolation of the table provided in the original CTBench paper \cite{mao2025} with added entries for AD-CERT and CC-DIST.
\begin{table}[h]
\centering
\caption{Training and certification time for AD-CERT on different datasets and $\varepsilon$.}
\label{tab:adcert_runtime}
\footnotesize
\begin{tabular}{l c c c}
\toprule
Dataset & $\varepsilon$ & Train Time (seconds) & Certification Time (seconds) \\
\midrule
\multirow{2}{*}{MNIST}
& $0.1$ & $1.20 \times 10^{4}$ &  $7.67 \times 10^2$ \\
& $0.3$ & $4.08 \times 10^{3}$ & $5.32\times 10^4$ \\
\cmidrule{2-4}
\multirow{2}{*}{CIFAR-10}
& $\frac{2}{255}$ & $2.66 \times 10^{4}$ & $2.61 \times 10^{5} $ \\
& $\frac{8}{255}$ & $1.59 \times 10^{4}$ & $3.58 \times 10^{4}$ \\
\cmidrule{2-4}
TinyImageNet & $\frac{1}{255}$ & $8.97 \times 10^{4}$ & $1.54 \times 10^5$ \\
\bottomrule
\end{tabular}
\end{table}
\begin{table}[h]
\centering
\caption{Detailed breakdown of training costs for each certified training method.}
\footnotesize
\label{tab:training_costs}
\begin{tabular}{l l l}
\toprule
Method & Training cost per batch & Details \\
\midrule
Standard & $T$ & Forward + backward \\
PGD / EDAC & $(M+1)T$ & $M$ attack steps + standard loss computation \\
IBP & $2T$ & Lower and upper bounds propagation \\
SABR & $(M+2)T$ & IBP + PGD \\
MTL-IBP & $(M+2)T$ & IBP + PGD \\
AD-CERT$^\dagger$ & $(M+2)T + F$ & IBP + PGD + detached teacher forward \\
CC-DIST$^\dagger$ & $(M+2)T + F_h$ & CC-IBP + PGD + detached teacher forward on feature split \\
TAPS & $2t + K(M+1)(T-t)$ & IBP for first split and PGD for second split for each class \\
STAPS & $2t + K(M+1)(T-t) + (M+1)T$ & TAPS + PGD \\
\midrule
\multicolumn{3}{c}{Legend} \\
\midrule
$T$ & \multicolumn{2}{l}{Time cost for standard training, including forward + backward pass} \\
$F$ & \multicolumn{2}{l}{Time cost for a forward pass through the teacher network} \\
$F_h$ & \multicolumn{2}{l}{Time cost for a forward pass through the teacher feature extractor network} \\
$M$ & \multicolumn{2}{l}{Number of adversarial attack steps, including repeats} \\
$K$ & \multicolumn{2}{l}{Number of classes} \\
$t$ & \multicolumn{2}{l}{Time cost for standard training in the first network split in TAPS} \\
\bottomrule
\multicolumn{3}{l}{\footnotesize $^\dagger$ Added entries from original table in CTBench \cite{mao2025}.}
\end{tabular}
\end{table}

\section{CC-IBP Distillation (CC-DIST)}\label{app:cc-dist}
\citet{depalma2026} formulate the CC-DIST loss function as a linear combination of an \emph{expressive} robust distillation loss in the feature space, $\mathcal{R}_{f_\theta}(\alpha; \mathbf{x}, y)$, and their prior expressive loss formulation, CC-IBP \cite{palma2024}. Let $f_\theta: \mathbb{R}^{d} \mapsto{\mathbb{R}^C}$ be a classification model as expressed earlier in §\ref{sec:background}. Then, we can rewrite $f_\theta$ as a composition of a feature extractor $h_\theta: \mathbb{R}^{d} \mapsto{\mathbb{R}^k}$ and a classification head $g_\theta: \mathbb{R}^{k} \mapsto{\mathbb{R}^C}$, yielding $f_\theta = g_\theta \circ h_\theta$ (note that we abuse notation of $\theta$ for brevity). Then, the expressive robust distillation loss over the feature space is defined by:
\begin{equation}
    \mathcal{R}_{f_\theta}(\alpha; \mathbf{x}, y) := \sum^k_{i=1} \max \left\{\left([\overline{h}_\theta(\alpha; \mathbf{x})]_i  - [h_{\theta_t}(\mathbf{x})]_i\right)^2, \left([\underline{h}_\theta(\alpha; \mathbf{x})]_i  - [h_{\theta_t}(\mathbf{x})]_i\right)^2\right\},
    \label{eq:rob-dist-loss-ccdist}
\end{equation}
where $\overline{h}_\theta(\alpha; \mathbf{x}) := (1-\alpha)h_\theta(\mathbf{x_{adv}}) + \alpha \overline{h}_\theta(\mathbf{x})$ and  $\underline{h}_\theta(\alpha; \mathbf{x}) := (1-\alpha)h_\theta(\mathbf{x_{adv}}) + \alpha \underline{h}_\theta(\mathbf{x})$, with $\overline{h}_\theta(\mathbf{x})$ and $\underline{h}_\theta(\mathbf{x})$ denoting the IBP upper and lower bounds of $h_\theta(\mathbf{x})$, respectively. Note that $h_{\theta_t}$ denotes the feature extractor of a pre-trained PGD teacher model.
\\ \\
Now, let $\beta$ be the distillation coefficient, determining the relative weight of $\mathcal{R}_{f_\theta}(\alpha; \mathbf{x}, y)$ from Equation \eqref{eq:rob-dist-loss-ccdist}. Omitting any regularisation, the training loss for CC-DIST takes the following form:
\begin{equation}
    \mathcal{L}^{\text{CC-DIST}}_{f_{\theta}}(\alpha, \beta; \mathbf{x}, y) := \ \mathcal{L}^{\text{CC-IBP}}_{f_{\theta}}(\alpha; \mathbf{x}, y) + \beta \ \mathcal{R}_{f_{\theta}}(\alpha; \mathbf{x}, y),
\end{equation}
where $\mathcal{L}^{\text{CC-IBP}}_{f_{\theta}}(\alpha; \mathbf{x}, y)$ is the CC-IBP loss from \citet{palma2024}. Denoting $k$ as the dimensionality of the feature space, $\beta = 5/k$ was used for the majority of experiments.
\end{document}